\newtheorem{theorem}{Theorem}
\newtheorem{lemma}{Lemma}
\newtheorem{corollary}{Corollary}
\newtheorem{definition}{Definition}
\newtheorem{condition}{Condition}
\def\tsc#1{\csdef{#1}{\textsc{\lowercase{#1}}\xspace}}
\begin{document}
\let\WriteBookmarks\relax
\def\floatpagepagefraction{1}
\def\textpagefraction{.001}

% Short title
\shorttitle{Generalized-activated Deep Double Deterministic Policy Gradients}    

% Short author
\shortauthors{Jiafei Lyu, et al.}  

% Main title of the paper
\title[mode = title]{Value Activation for Bias Alleviation: Generalized-activated Deep Double Deterministic Policy Gradients}  

%% Title footnote mark
%% eg: \tnotemark[1]
\tnotemark[1] 
%
%% Title footnote 1.
%% eg: \tnotetext[1]{Title footnote text}
\tnotetext[1]{This research was partly supported by the National Natural Science Foundation of China (Grant No. 41876098), the National Key Research and Development Program of China (Grant No. 2020AAA0108303), and Shenzhen Science and Technology Project (Grant No. JCYJ20200109143041798).}
%
%\tnotetext[2]{The second title footnote which is a longer text
%    matter to fill through the whole text width and overflow into
%    another line in the footnotes area of the first page.}

% First author
%
% Options: Use if required
% eg: \author[1,3]{Author Name}[type=editor,
%       style=chinese,
%       auid=000,
%       bioid=1,
%       prefix=Sir,
%       orcid=0000-0000-0000-0000,
%       facebook=<facebook id>,
%       twitter=<twitter id>,
%       linkedin=<linkedin id>,
%       gplus=<gplus id>]

\author[1]{Jiafei Lyu}[orcid=0000-0001-6616-417X]

% Corresponding author indication
%\cormark[<corr mark no>]

% Footnote of the first author
%\fnmark[<footnote mark no>]

% Email id of the first author
\ead{lvjf20@mails.tsinghua.edu.cn}

% URL of the first author
%\ead[url]{<URL>}

% Credit authorship
% eg: \credit{Conceptualization of this study, Methodology, Software}
%\credit{<Credit authorship details>}

% Address/affiliation
\affiliation[1]{organization={Tsinghua Shenzhen International Graduate School, Tsinghua University},
            addressline={Lishui Road, Nanshan District}, 
            city={Shenzhen},
%          citysep={}, % Uncomment if no comma needed between city and postcode
            postcode={518055}, 
            %state={},
            country={China}}

\author[1]{Yu Yang}[]

% Corresponding author indication
%\cormark[<corr mark no>]

% Footnote of the first author
%\fnmark[<footnote mark no>]

% Email id of the first author
\ead{yy20@mails.tsinghua.edu.cn}

% URL of the first author
%\ead[url]{<URL>}

% Credit authorship
% eg: \credit{Conceptualization of this study, Methodology, Software}
%\credit{<Credit authorship details>}

%% Address/affiliation
%\affiliation[1]{organization={Tsinghua Shenzhen International Graduate School, Tsinghua University},
%            addressline={Lishui Road, Nanshan District}, 
%            city={Shenzhen},
%%          citysep={}, % Uncomment if no comma needed between city and postcode
%            postcode={518055}, 
%            %state={},
%            country={China}}

\author[2]{Jiangpeng Yan}[]

% Footnote of the second author
\fnmark[$^*$]

% Email id of the second author
\ead{yanjp17@mails.tsinghua.edu.cn}

% URL of the second author
%\ead[url]{}

% Credit authorship
%\credit{}

% Address/affiliation
\affiliation[2]{organization={Department of Automation, Tsinghua University},
            addressline={Shuangqing Road, Haidian District}, 
            city={Beijing},
%          citysep={}, % Uncomment if no comma needed between city and postcode
            postcode={100084}, 
            %state={},
            country={China}}

\author[1]{Xiu Li}[]

% Footnote of the second author
\fnmark[$^*$]

% Email id of the second author
\ead{li.xiu@sz.tsinghua.edu.cn}

% Corresponding author text
\cortext[1]{Corresponding author}

% Footnote text
%\fntext[1]{}

% For a title note without a number/mark
%\nonumnote{}

% Here goes the abstract
\begin{abstract}
It is vital to accurately estimate the value function in Deep Reinforcement Learning (DRL) such that the agent could execute proper actions instead of suboptimal ones. However, existing actor-critic methods suffer more or less from underestimation bias or overestimation bias, which negatively affect their performance. In this paper, we reveal a simple but effective principle: \emph{proper value correction benefits bias alleviation}, where we propose the generalized-activated weighting operator that uses \emph{any} non-decreasing function, namely activation function, as weights for better value estimation. Particularly, we integrate the generalized-activated weighting operator into value estimation and introduce a novel algorithm, Generalized-activated Deep Double Deterministic Policy Gradients (GD3). We theoretically show that GD3 is capable of alleviating the potential estimation bias. We interestingly find that simple activation functions lead to satisfying performance with no additional tricks, and could contribute to faster convergence. Experimental results on numerous challenging continuous control tasks show that GD3 with task-specific activation outperforms the common baseline methods. We also uncover a fact that fine-tuning the polynomial activation function achieves superior results on most of the tasks. Codes will be available upon publication.
\end{abstract}

% Use if graphical abstract is present
%\begin{graphicalabstract}
%\includegraphics{}
%\end{graphicalabstract}

% Research highlights
\begin{highlights}
\item We propose a novel generalized-activated weighting operator for bias alleviation in deep reinforcement learning. 
\item We theoretically and experimentally show that the distance between the max operator and the generalized-activated weighting operator can be bounded.
\item We show theoretically and experimentally that generalized-activated weighting operator helps alleviate both underestimation bias and overestimation bias.
\item We find that simple activation functions are enough for amazing performance without any tricks and special design for activation function. 
\end{highlights}

% Keywords
% Each keyword is seperated by \sep
\begin{keywords}
reinforcement learning\sep estimation bias \sep activation function \sep continuous control
\end{keywords}

\maketitle

% Main text
\section{Introduction}
Reinforcement Learning (RL) has achieved remarkable progress in continuous control scenarios in recent years (\cite{mnih2015human}). Actor-critic methods (\cite{konda2000actor}) are widely adopted in continuous control problems, which relies on the critic to estimate the value function under the current state and execute policy via an actor. In actor-critic-style methods, it is important for the critic to estimate the value function in a good manner such that the actor can be driven to learn better policy. However, existing methods are far from satisfying. 

As an extension of Deterministic Policy Gradient (DPG) (\cite{silver2014deterministic}), Deep Deterministic Policy Gradient (DDPG) (\cite{lillicrap2015continuous}) is one of the earliest and most widely-used algorithms. DDPG is designed to operate on a continuous regime with a deterministic policy based on the Actor-Critic method. Although DDPG performs well in many continuous control tasks, it actually suffers from overestimation bias  (\cite{fujimoto2018addressing}), as is often reported in Deep Q-Network (DQN) (\cite{van2016deep,zhang2017weighted}). Such phenomenon is caused by the fact that the actor network in DDPG learns to take action according to the maximal value estimated by the critic network, which would result in poor performance. To address the function approximation error in DDPG, Fujimoto et al. propose Twin Delayed Deep Deterministic Policy Gradient (TD3) algorithm (\cite{fujimoto2018addressing}) by training two critic networks and updating with the smaller estimated value to tackle the problem, which is inspired to the idea of Double DQN (\cite{van2016deep}). TD3 significantly outperforms DDPG with reduced overestimation bias with Clipped Double Q-learning and delayed update of target networks. While it turns out that TD3 may induce large underestimation bias (\cite{ciosek2019better}).

In this paper, we reveal a simple but effective principle: \emph{proper value correction benefits bias alleviation}, i.e., the bias in value estimation can be eased. Particularly, we propose a novel and general function transformation framework for value estimation bias alleviation in deep reinforcement learning. Our method leverages \emph{any} non-decreasing function, namely activation function, and leads to a novel \emph{generalized-activated weighting operator}. The operator is an \emph{activation} to the estimated Q values which generates a new distribution with a one-to-one mapping to the original value distribution as shown in Fig \ref{fig:gd3fig}. The activated distribution reveals the importance of the estimated value in the distribution. Intuitively, we can get a better value estimation by weighting these estimated values. We show that the generalized-activated weighting operator helps alleviate overestimation bias in DDPG if built upon single critic network, where we develop Generalized-activated Deep Deterministic Policy Gradient (GD2) algorithm. It also helps relieve underestimation bias in TD3 if built upon double critics, which incurs \underline{G}eneralized-activated \underline{D}eep \underline{D}ouble \underline{D}eterministic Policy Gradients (GD3) algorithm. Extensive experimental results on challenging continuous control tasks in OpenAI Gym (\cite{brockman2016openai}) show that GD3 significantly outperforms common baseline methods and could achieve better performance and sample efficiency with task-specific activations.

\begin{figure}
    \centering
    \includegraphics[scale=0.35]{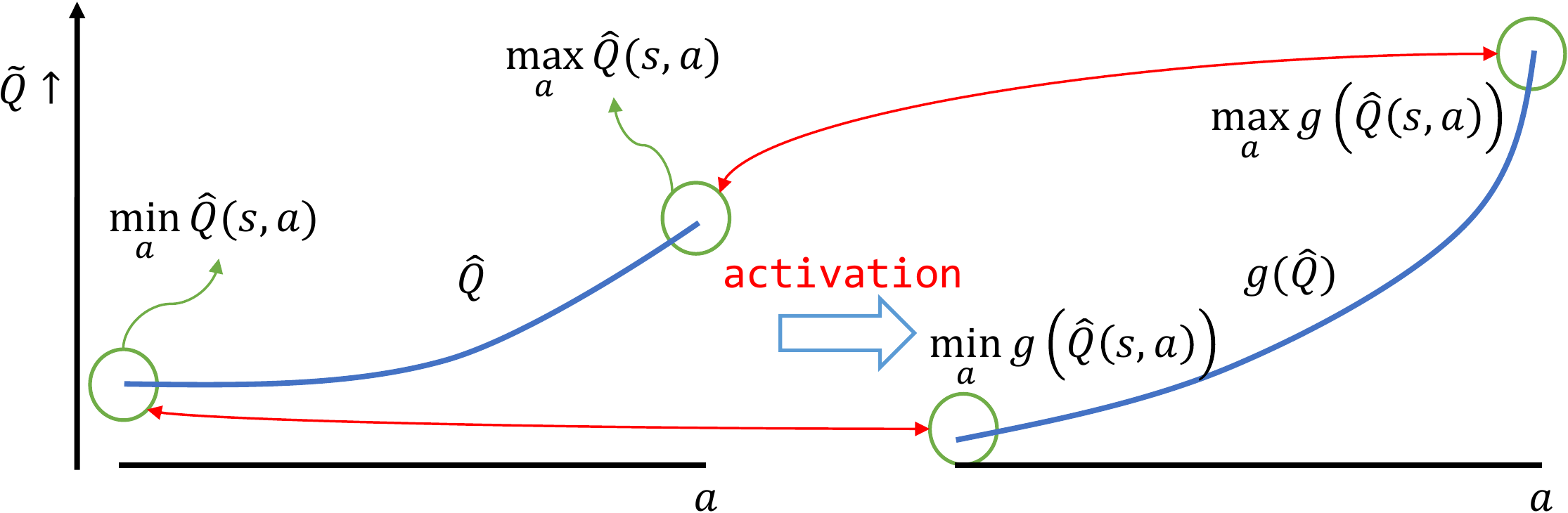}
    \caption{How activation function changes the distribution of value function over action space. Large Q would become larger after being activated.}
    \label{fig:gd3fig}
\end{figure}

Our contributions can be summarized as follows:
\begin{itemize}
    \item We propose a novel generalized-activated weighting operator for bias alleviation in deep reinforcement learning. 
    \item We theoretically and experimentally show that the distance between the max operator and the generalized-activated weighting operator can be bounded.
    \item We show theoretically and experimentally that generalized-activated weighting operator helps alleviate both underestimation bias and overestimation bias.
    \item We find that simple activation functions are enough for amazing performance without any tricks and special design for activation function. 
\end{itemize}

The rest of the paper is organized as follows: Section \ref{sec:relatedwork} is a collection of related work; Section \ref{sec:pre} gives preliminaries of reinforcement learning; Section \ref{sec:gdw} formally defines generalized-activated weighting operator and gives theoretical analysis; Section \ref{sec:easebias} develops GD2 and GD3 algorithm; Section \ref{sec:experiment} presents detailed experimental results of GD3; Section \ref{sec:conclusion} concludes the paper.

\section{Related Work}
\label{sec:relatedwork}
Reinforcement learning has witnessed impressive progress in recent years (\cite{mnih2015human, mastering2015, Arulkumaran2019AlphaStarAE, Mirhoseini2020ChipPW}), and has been applied into many fields, like games (\cite{Ye2020TowardsPF, Ye2020MasteringCC, Liu2021SelfplayRL}), complex system control (\cite{Chen2021FaulttolerantAT, Huang2020AdaptivePS, Plos2020ComparisonOQ}), financial trading (\cite{Ma2021APM}), recommendation (\cite{Liu2020TopawareRL, Pei2019ValueawareRB}), etc.

Actor-Critic methods (\cite{konda2000actor}) are widely used in reinforcement learning which involve value function approximation for the estimation of policy gradient (\cite{sutton2000policy}). Many improvements and advances upon actor-critic methods are proposed in recent years, including asynchronous method (\cite{mnih2016asynchronous}), maximum entropy reinforcement learning (\cite{haarnoja2018soft}), distributional RL \cite{bellemare2017distributional, Ma2020DSACDS}, offline RL (batch RL) \cite{Levine2020OfflineRL, Yang2021BelieveWY, Jiang2021OfflineDM}, etc. DDPG (\cite{lillicrap2015continuous}), which is built upon actor-critic and sheds lights to continuous control, extends DPG (\cite{silver2014deterministic}) algorithm to deep reinforcement learning. There are many works that are built on and improve DDPG, such as model-based (\cite{gu2016continuous}), distributed (\cite{popov2017data}), distributional (\cite{barth2018distributed,bellemare2017distributional}), prioritized experience replay (\cite{horgan2018distributed}) and multi-step returns (\cite{meng2020effect}). While DDPG may cause devastating overestimation bias (\cite{fujimoto2018addressing}) which is also often reported in DQN and is relieved by Double DQN in discrete regime (\cite{van2016deep}).

In this paper, we are interested in ways of getting a better value estimator for continuous control tasks. How to obtain satisfying value estimation remains an important problem to be solved in reinforcement learning, and has been widely studied (\cite{pan2020softmax, lyu2021efficient, Kuznetsov2021AutomatingCO}). TD3 (\cite{fujimoto2018addressing}) improves value estimation of DDPG with Clipped Double Q-learning while it can lead to large underestimation bias (\cite{ciosek2019better}). Kuznetsov et al. adopt a truncated ensemble of distributional critics for controlling overestimation in continuous setting (\cite{kuznetsov2020control}). Moskovitz et al. balance between optimism and pessimism with a multi-arm bandit aiming at maximizing the immediate performance improvement (\cite{Moskovitz2021tactical}). Weighted Delayed DDPG (\cite{He2020ReducingEB}) adopts weighted sum of the minimum of Q-functions and the average of an ensemble of Q-functions for overestimation bias alleviation. A recent work (\cite{Cetin2021LearningPF}) studies a flexible data-driven way of bias controlling method with the aid of epistemic uncertainty. Different from these advances, we focus on improving the value estimation by leveraging \emph{any} non-decreasing activation functions. Our method does not involve training critic ensembles, estimating uncertainty or bandit components, which makes it much lighter and efficient.

Most relevant to our work is SD3 (\cite{pan2020softmax}), which uses the softmax operator on value function to get a softer value estimation. It is worth noting that SD3 is a special case of our method, i.e., we set the activation function as exponential and then GD3 would degenerate into SD3. Also, according to our experimental results, exponential activation is often not the optimal activation function (see Section \ref{sec:experiment}). GD3 is a general and effective method for bias alleviation in continuous control.

\section{Preliminaries}
\label{sec:pre}
Reinforcement learning aims to maximize expected discounted cumulative future reward and it can be formulated by a Markov Decision Process (MDP) which is made up of state space $\mathcal{S}$, action space $\mathcal{A}$, reward function $r: \mathcal{S} \times \mathcal{A} \rightarrow \mathbb{R}$, transition probability distribution $p(s^\prime|s,a)$, and discount factor $\gamma \in [0,1]$. In this paper, we consider continuous action space and reward function scenarios and assume they are bounded as required in (\cite{pan2020softmax,silver2014deterministic}), i.e., $r(s,a)\in[-R_{\mathrm{max}}, R_{\mathrm{max}}], \forall s,a$. DDPG is an off-policy algorithm for continuous control which learns deterministic policy $\pi(\cdot;\phi)$ instead of stochastic policy as in Q learning (\cite{watkins1989learning}). It is an extension of Deterministic Policy Gradient (DPG) (\cite{silver2014deterministic}) with deep neural networks and actor-critic structure. Since it is expensive to take max operator on continuous action space directly, the critic of DDPG estimates Q value by maximizing the Q-function $Q$: $\mathcal{S}\times\mathcal{A}\rightarrow\mathbb{R}$ to approximate max operator and is updated by minimizing $\mathbb{E}[(y_t-Q(s_t, a_t;\theta))^2]$, where $y_t=r_t +\gamma Q(s_{t+1}, \pi (s_{t+1}))$. $r_t$ is the scalar reward at time step $t$ and $Q(s_t, a_t;\theta)$ parameterized by $\theta$ is estimated by neural networks to approximate the true value $\theta^{\mathrm{true}}$; The actor decides what action to take and is updated via deterministic policy gradient:
\begin{equation}
    \nabla_\phi J(\pi(\cdot;\phi)) = \mathbb{E} [\nabla_\phi (\pi(s;\phi)) \nabla_a Q(s,a; \theta) |_{a=\pi(s,\phi)}].
\end{equation}

\section{Generalized-activated Weighting Operator}
\label{sec:gdw}

\subsection{How to better estimate value function?}

In Actor-Critic, it is crucial for the critic network to effectively estimate the value function as the performance of the actor network is hugely influenced by its counterpart critic. TD3 alleviates the overestimation bias in DDPG (\cite{fujimoto2018addressing}) while it may result in large underestimation bias (\cite{ciosek2019better}). Both of the biases would negatively affect the performance of the algorithm. We, however, propose to do value estimate correction using \emph{any} non-decreasing functions for more balanced value estimation. With function transformation, a new distribution for Q-values can be acquired. We name the transformation from one distribution to another as {\it activation}, and the corresponding non-decreasing function as {\it activation function}.

Fig~\ref{fig:gd3fig} shows the process of activation from original value function distribution to activated one via a non-decreasing function $g(\cdot;\bm{\psi})$ parameterized by $\bm{\psi}$ where $\bm{\psi}$ can be a scalar or a vector. Note that we suppose that one action corresponds to one unique value function here for simplicity and better illustration which may not be necessarily true in the continuous regime. We require the activation function to be non-decreasing as we need larger weight upon larger Q value to prevent devastating underestimation bias. The activation process is actually a one-to-one order-preserving mapping (as the red line in Fig~\ref{fig:gd3fig} shows) from one probability space to another probability space with larger variance, i.e., 
the diversity of the value distribution would become larger after being activated. 

The operator $\mathcal{T}$ is a value estimator which gives estimation of value function given state $s$ and is used to update target value, i.e. $y = r+\gamma \mathcal{T}(s)$. Then we could give a formal definition of the generalized-activated weighting operator.

\begin{definition}
 Let $g(\cdot;\bm{\psi})$ be a non-decreasing function parameterized by $\bm{\psi}$, then the generalized-activated weighting operator is defined as: 
 \begin{equation}
     GA_{g}(Q(s,\pi(s));\bm{\psi}) = \int_{a\in\mathcal{A}}\frac{g(Q(s,a);\bm{\psi})Q(s,a)}{\int_{a^\prime\in\mathcal{A}}g(Q(s,a^\prime);\bm{\psi})da^\prime}da.
 \end{equation}
\end{definition}

\subsection{Theoretical Analysis}
Theorem \ref{theo:ga} and Theorem \ref{theo:value} guarantee the rationality and practicability of generalized-activated weighting operator. Note that some proofs of theorems in this part and section \ref{sec:easebias} are deferred to Appendix A.

We first give a simple lemma under the bounded reward function assumption.

\begin{lemma}
\label{lemma:qbound}
The Q value is bounded, i.e., $-\frac{R_{\mathrm{max}}}{1-\gamma}\le Q(s,a)\le\frac{R_{\mathrm{max}}}{1-\gamma}, \forall s,a$.
\end{lemma}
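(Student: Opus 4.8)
The plan is to establish the bound $-\frac{R_{\mathrm{max}}}{1-\gamma}\le Q(s,a)\le\frac{R_{\mathrm{max}}}{1-\gamma}$ directly from the definition of the state-action value function as a discounted sum of rewards. First I would write out $Q(s,a) = \mathbb{E}\left[\sum_{t=0}^{\infty}\gamma^t r(s_t,a_t)\,\middle|\,s_0=s, a_0=a\right]$, where the expectation is taken over trajectories generated by following the policy after the initial state-action pair. This is the standard Bellman-consistent definition that underlies the target update $y_t = r_t + \gamma Q(s_{t+1},\pi(s_{t+1}))$ described in the preliminaries.

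Next I would apply the bounded-reward assumption stated in the preliminaries, namely $r(s,a)\in[-R_{\mathrm{max}}, R_{\mathrm{max}}]$ for all $s,a$. Since every reward term satisfies $-R_{\mathrm{max}}\le r(s_t,a_t)\le R_{\mathrm{max}}$, I would bound the entire discounted sum termwise. Using monotonicity of expectation and the linearity of the discounted sum, the upper bound follows from
\begin{equation}
Q(s,a)\le \mathbb{E}\left[\sum_{t=0}^{\infty}\gamma^t R_{\mathrm{max}}\right] = R_{\mathrm{max}}\sum_{t=0}^{\infty}\gamma^t = \frac{R_{\mathrm{max}}}{1-\gamma},
\end{equation}
where the geometric series converges because $\gamma\in[0,1)$. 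The lower bound is symmetric, replacing $R_{\mathrm{max}}$ with $-R_{\mathrm{max}}$ in each term, yielding $Q(s,a)\ge -\frac{R_{\mathrm{max}}}{1-\gamma}$.

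There is essentially no hard part here: the result is a routine consequence of the geometric-series bound on a discounted reward stream, and the only subtlety worth flagging is the implicit assumption $\gamma<1$ (rather than the literal $\gamma\in[0,1]$ written in the preliminaries) needed for the series to converge and the constant $\frac{R_{\mathrm{max}}}{1-\gamma}$ to be finite. I would note this discounting assumption explicitly. The main value of the lemma is not its difficulty but its role downstream: it guarantees that the $Q$ values fed into the activation function $g(\cdot;\bm{\psi})$ in the generalized-activated weighting operator lie in a compact interval, which is what will later let the authors bound the distance between $GA_g$ and the max operator and control the integrals in the operator's definition.
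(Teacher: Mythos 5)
Your proof is correct and follows essentially the same route as the paper's: write $Q$ as the expected discounted reward sum, bound each reward term by $\pm R_{\mathrm{max}}$, and sum the geometric series. Your version is in fact slightly cleaner — you index the sum from $t=0$ and explicitly flag that $\gamma<1$ is needed for $\sum_t\gamma^t=\frac{1}{1-\gamma}$, whereas the paper sums from $t=1$ yet still writes the total as $\frac{R_{\mathrm{max}}}{1-\gamma}$ (a harmless overestimate, since $\sum_{t\ge 1}\gamma^t=\frac{\gamma}{1-\gamma}\le\frac{1}{1-\gamma}$).
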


\begin{proof}
By definition, we have 
\begin{equation*}
\begin{aligned}
Q(s,a) &= \mathbb{E}_{\pi}\left[\sum_{t=1}^T\gamma^t r(s_t,a_t) \bigg | a\sim\pi(\cdot|s)\right] \\
&\le \mathbb{E}_{\pi} \left[ R_{\mathrm{max}}\sum_{t=1}^T \gamma^t \right] = \frac{R_{\mathrm{max}}}{1-\gamma}.
\end{aligned}
\end{equation*}

Similarly, we have 
\begin{equation*}
\begin{aligned}
Q(s,a) &= \mathbb{E}_{\pi}\left[\sum_{t=1}^T\gamma^t r(s_t,a_t)\bigg|a\sim\pi(\cdot|s)\right] \\
 &\ge \mathbb{E}_{\pi} \left[- R_{\mathrm{max}}\sum_{t=1}^T \gamma^t \right] = -\frac{R_{\mathrm{max}}}{1-\gamma}.
\end{aligned}
\end{equation*}
\end{proof}

Our first main result reveals that the distance between generalized-activated weighting operator and the max operator can be bounded.

\begin{theorem}
\label{theo:ga}
Denote $\mathcal{C}(Q,s,\epsilon) = \{a|a\in\mathcal{A}, Q(s,a)\ge \max_a Q(s,a) - \epsilon\}$, $F(Q,s,\epsilon) = \int_{a\in\mathcal{C}(Q,s,\epsilon)}1da$ for $\epsilon>0$. Let $\beta\in\mathcal{B}$ such that $g(Q(s,a);\bm{\psi}) \ge \exp(\beta Q(s,a))$, and denote $T(Q^*)$ as the maximum value of the function $T(Q(s,a);\bm{\psi},\beta) = \frac{1}{\beta}\ln g(Q(s,a);\bm{\psi}) - Q(s,a)$ for some non-decreasing functions $g(Q(s,a);\bm{\psi})$, then the difference between the max operator and generalized-activated weighting operator is bounded:
$$
0\le \max_aQ(s,a) - GA_{g}(Q(s,\pi(s));\bm{\psi}) \le M(Q,\epsilon;\bm{\psi},\beta),
$$
where $M(Q,\epsilon;\bm{\psi},\beta) = \epsilon + \frac{\int_{a\in\mathcal{A}}1da - 1 + \ln F(Q,s,\epsilon)}{\beta} + T(Q^*)$.
\end{theorem}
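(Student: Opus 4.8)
The plan is to treat $GA_g$ as a probability-weighted average of the action values and to sandwich the weight function $g$ between two exponentials, thereby reducing the statement to a softmax-versus-max estimate. Throughout I abbreviate $GA_g := GA_{g}(Q(s,\pi(s));\bm{\psi})$ and $Q^* := \max_a Q(s,a)$, and I note that Lemma~\ref{lemma:qbound} guarantees $Q$ is bounded, so the operator, the partition integral, and $T(Q^*)$ are all finite and the manipulations below are legitimate.

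The lower bound is immediate. By the hypothesis $g(Q(s,a);\bm{\psi}) \ge \exp(\beta Q(s,a)) > 0$, so the weights $g(Q(s,a);\bm{\psi})/\int_{a'\in\mathcal{A}} g(Q(s,a');\bm{\psi})\,da'$ are non-negative and integrate to one; hence $GA_g$ is a convex combination of the values $Q(s,a)$, which gives $GA_g \le Q^*$ and therefore $Q^* - GA_g \ge 0$. Using that the weights integrate to one, I would then rewrite the difference as a single weighted average of the optimality gaps:
$$\max_a Q(s,a) - GA_g = \frac{\int_{a\in\mathcal{A}} g(Q(s,a);\bm{\psi})\,(Q^* - Q(s,a))\,da}{\int_{a\in\mathcal{A}} g(Q(s,a);\bm{\psi})\,da}.$$

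Next I would extract from the hypotheses the two-sided control $\exp(\beta Q) \le g(Q;\bm{\psi}) \le \exp(\beta(Q + T(Q^*)))$: the left inequality is exactly the assumption defining $\beta \in \mathcal{B}$, while the right one follows from the definition of $T(Q^*)$ as the maximum of $\frac{1}{\beta}\ln g(Q;\bm{\psi}) - Q$, which yields $\frac{1}{\beta}\ln g(Q;\bm{\psi}) - Q \le T(Q^*)$ pointwise. I would then split the action space into the near-optimal set $\mathcal{C}(Q,s,\epsilon)$, on which $0 \le Q^* - Q \le \epsilon$, and its complement. On $\mathcal{C}$ the numerator integrand is at most $\epsilon\,g(Q;\bm{\psi})$, which contributes at most $\epsilon$ once divided by the full denominator; this produces the leading $\epsilon$ term of $M$.

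The main obstacle is converting the remaining ratio of integrals into the additive logarithmic-measure terms $\frac{\int_{a\in\mathcal{A}}1\,da - 1 + \ln F(Q,s,\epsilon)}{\beta}$ together with the correction $T(Q^*)$. The strategy is to bound the partition function both ways: from below by restricting to $\mathcal{C}$ and invoking the lower exponential sandwich, $\int_{a\in\mathcal{A}} g\,da \ge \int_{\mathcal{C}} \exp(\beta Q)\,da \ge \exp(\beta(Q^*-\epsilon))\,F(Q,s,\epsilon)$, so that the measure $F$ of the near-optimal set enters through $\frac{\ln F(Q,s,\epsilon)}{\beta}$; and from above via the upper exponential sandwich, which is where the total measure $\int_{a\in\mathcal{A}}1\,da$ and the correction $T(Q^*)$ appear. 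Passing to logarithms and applying the elementary inequality $\ln x \le x-1$ to the measure factor converts $\frac{1}{\beta}\ln\!\big(\int_{a\in\mathcal{A}}1\,da\big)$ into $\frac{\int_{a\in\mathcal{A}}1\,da - 1}{\beta}$. The two delicate points are (i) keeping the discrepancy between $g$ and the pure exponential \emph{additive} rather than multiplicative in $\beta$ — this is precisely what $T(Q^*)$ is designed to absorb, and is the feature that distinguishes the general-$g$ case from the softmax case $g=\exp(\beta Q)$ — and (ii) controlling the complement's contribution so that only the logarithmic-measure terms and $T(Q^*)$ survive. Collecting the three pieces then gives $\max_a Q(s,a) - GA_g \le \epsilon + \frac{\int_{a\in\mathcal{A}}1\,da - 1 + \ln F(Q,s,\epsilon)}{\beta} + T(Q^*) = M(Q,\epsilon;\bm{\psi},\beta)$, which completes the estimate.
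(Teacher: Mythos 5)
Your lower bound, the identity $\max_a Q(s,a) - GA_g = \int_{\mathcal{A}} g\,(Q^*-Q)\,da \big/ \int_{\mathcal{A}} g\,da$, the two-sided sandwich $\exp(\beta Q)\le g(Q;\bm{\psi})\le\exp(\beta(Q+T(Q^*)))$, and the $\epsilon$ contribution from $\mathcal{C}(Q,s,\epsilon)$ are all correct. The gap is exactly the step you flag as ``the main obstacle'' and then leave unresolved: converting the complement's contribution into the additive terms $\frac{\int_{\mathcal{A}}1\,da-1+\ln F}{\beta}+T(Q^*)$. If you carry out your own decomposition, the complement term becomes
$$
\frac{\int_{\mathcal{A}\setminus\mathcal{C}}g\,(Q^*-Q)\,da}{\int_{\mathcal{A}}g\,da}
\;\le\;
\frac{e^{\beta T(Q^*)}e^{\beta\epsilon}}{F(Q,s,\epsilon)}\int_{\mathcal{A}\setminus\mathcal{C}}(Q^*-Q)\,e^{-\beta(Q^*-Q)}\,da
\;\le\;
\frac{e^{\beta T(Q^*)+\beta\epsilon-1}}{\beta}\cdot\frac{\int_{\mathcal{A}}1\,da}{F(Q,s,\epsilon)},
$$
which is \emph{multiplicative} in $e^{\beta T(Q^*)}$ and in the measure ratio, not additive as in $M$. ``Passing to logarithms'' cannot repair this: the quantity you are bounding is a sum of a ratio of integrals and the $\epsilon$ piece, and the logarithm does not distribute over that sum. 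Relatedly, your claim that $\ln x\le x-1$ is applied to ``the measure factor'' $\frac{1}{\beta}\ln\big(\int_{\mathcal{A}}1\,da\big)$ misidentifies the origin of the $\frac{\int 1\,da-1}{\beta}$ term: no such logarithm of the total measure ever arises.

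The missing idea is the intermediate log-partition operator $\mathrm{lsg}_\psi(Q)=\frac{1}{\beta}\ln\int_{\mathcal{A}}g(Q(s,a);\bm{\psi})\,da$ used in the paper. Writing $p_\psi=g/\int g$, one has the exact identity $\mathrm{lsg}_\psi(Q)-GA_g=\frac{1}{\beta}\int_{\mathcal{A}}(-p_\psi\ln p_\psi)\,da+\int_{\mathcal{A}}p_\psi\,T(Q)\,da$; the entropy integral is bounded by $\int_{\mathcal{A}}1\,da-1$ via $-p\ln p\le 1-p$ (this is where $\ln x\le x-1$ actually enters, applied to $1/p_\psi$), and the second integral is at most $T(Q^*)$. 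Separately, restricting the partition integral to $\mathcal{C}$ and invoking $g\ge e^{\beta Q}$ gives $\mathrm{lsg}_\psi(Q)\ge\max_aQ(s,a)-\epsilon+\frac{\ln F(Q,s,\epsilon)}{\beta}$. Chaining these two estimates through $\mathrm{lsg}_\psi$ is what makes every error term additive and produces $M$; the complement of $\mathcal{C}$ is never treated separately at all. To fix your writeup, replace the split-and-take-logs plan with this log-partition/entropy argument, or else supply a genuinely different derivation that produces the additive bound --- your current sketch does not.
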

\begin{proof}
See Appendix A.1.
\end{proof}

\noindent{\bf Remark 1:} If we adopt exponential function as activation function, then (1) $T(Q^*)$ would become a constant; (2) the generalized-activated weighting operator would degenerate to softmax operator, which is recently studied in (\cite{pan2020softmax}). 

\noindent{\bf Remark 2:} The upper bound would be 0 if $Q(s,a)$ is a constant function since $$
\begin{aligned}
GA_g(Q(s,\pi(s));\psi) &= \int_{a\in\mathcal{A}}\frac{g(Q(s,a);\psi)Q(s,a)}{\int_{a^\prime\in\mathcal{A}}g(Q(s,a^\prime);\psi)da^\prime}da \\
&= Q(s,a) = \max_a Q(s,a).
\end{aligned}
$$ The upper bound would converge to $\epsilon+T(Q^*)$ with $\beta\rightarrow\infty$. 

$T(Q^*)$ denotes the cost of the generalization from exponential family to arbitrary non-decreasing functions which measures the maximal distance between an arbitrary non-decreasing function to exponential activation parameterized by $\beta$. If $T(Q^*)$ always lies in a reasonable region, then the upper bound would be $O(\frac{1}{\beta})$. In fact, the distance between max operator and generalized-activated weighting operator is reasonable enough in practice where detailed discussion in available in section \ref{sec:activatecomp}. 

Lemma~\ref{lemma:set} says that we can always find such non-decreasing functions satisfying the inequality $g(Q(s,a);\bm{\psi})\ge\exp(\beta Q(s,a))$, which ensures the rationality of the theorem.

\begin{lemma}
\label{lemma:set}
The support set $\mathcal{B}$ containing $\beta$ such that for some non-decreasing functions $g(\cdot;\psi)$, $g(Q;\psi)\ge\exp (\beta Q)$, is non-empty, i.e., $| \mathcal{B} | >$ 0.
\end{lemma}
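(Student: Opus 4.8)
The plan is to prove Lemma~\ref{lemma:set} by exhibiting at least one pair $(g,\beta)$ for which the required inequality $g(Q;\psi)\ge\exp(\beta Q)$ holds over the whole admissible range of $Q$, thereby showing $\mathcal{B}$ is non-empty. The cleanest route exploits Lemma~\ref{lemma:qbound}, which tells us that $Q$ lives in the compact interval $[-\frac{R_{\mathrm{max}}}{1-\gamma},\frac{R_{\mathrm{max}}}{1-\gamma}]$. On a bounded domain the problem becomes almost trivial: I first observe that the exponential $\exp(\beta Q)$ is itself a non-decreasing function of $Q$ whenever $\beta\ge 0$, so the canonical choice is simply to take $g(Q;\psi)=\exp(\beta Q)$ for any fixed $\beta\ge 0$. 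Then the inequality holds with equality for every $Q$, and in particular $\beta$ (say $\beta=1$) belongs to $\mathcal{B}$, giving $|\mathcal{B}|>0$ immediately.

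To make the statement more convincing as a genuine \emph{support set} rather than a one-point set, I would additionally argue that the choice of $\beta$ is not unique. Because the domain of $Q$ is bounded, for any non-decreasing $g$ that is bounded below by a positive constant $c>0$ on this interval, I can always pick $\beta$ small enough (or even negative in magnitude) so that $\exp(\beta Q)$ is dominated by $g$. Concretely, since $Q\le \frac{R_{\mathrm{max}}}{1-\gamma}=:Q_{\max}$, we have $\exp(\beta Q)\le \exp(\beta Q_{\max})$ for $\beta\ge 0$, so it suffices to choose $\beta$ with $\exp(\beta Q_{\max})\le c\le g(Q;\psi)$, which is satisfiable for all sufficiently small nonnegative $\beta$. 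This shows $\mathcal{B}$ contains an entire interval of values, reinforcing that the hypothesis of Theorem~\ref{theo:ga} is not vacuous.

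The main obstacle, if any, is not the existence argument itself but pinning down precisely what class of activation functions $g$ the lemma is meant to quantify over --- the statement says ``for \emph{some} non-decreasing functions $g$,'' so I only need existence of a single compatible pair, not a universal claim over all $g$. The boundedness of $Q$ supplied by Lemma~\ref{lemma:qbound} is the essential ingredient that rules out pathologies (e.g.\ an unbounded domain where no fixed $\beta$ could dominate a slowly growing $g$), so I would make sure to invoke it explicitly. Beyond that, the result reduces to the elementary fact that $\exp(\beta Q)$ is non-decreasing for $\beta\ge 0$ and that a positive lower bound on $g$ over a compact interval can always be matched by a sufficiently flat exponential.
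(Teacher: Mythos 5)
Your core argument is correct and is actually a different (and cleaner) route than the paper's. The paper proves non-emptiness of $\mathcal{B}$ by fixing a sufficiently small $\beta$, invoking the first-order approximation $\exp(\beta Q)\approx 1+\beta Q$ together with the boundedness of $Q$ from Lemma~\ref{lemma:qbound}, and exhibiting a \emph{linear} activation $g(Q)=1+2Q$ claimed to dominate $\exp(\beta Q)$ because $2\gg\beta$. You instead observe that since the lemma only quantifies existentially over $g$, the pair $g(Q;\psi)=\exp(\beta Q)$ with any $\beta\ge 0$ already witnesses membership with equality; this is the most economical proof of the literal statement and, incidentally, sidesteps a weakness in the paper's own example ($1+2Q$ is negative, hence smaller than $\exp(\beta Q)>0$, for $Q<-1/2$, which is admissible since $Q$ may be as low as $-R_{\mathrm{max}}/(1-\gamma)$). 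One caveat on your second paragraph: the sufficient condition $\exp(\beta Q_{\max})\le c$ is \emph{not} ``satisfiable for all sufficiently small nonnegative $\beta$'' when $c<1$, since $\exp(\beta Q_{\max})\ge 1$ for every $\beta\ge 0$ (with $Q_{\max}>0$); the uniform-domination argument over the compact range of $Q$ only goes through for activations with $\inf g>1$, or after adding a bias term to $g$ as the paper later does. This does not affect your main existence claim, but you should either drop that sentence or restrict it to $c\ge 1$.
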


\begin{proof}
Note that when $\beta\rightarrow 0$, i.e., $\beta$ is sufficiently small, the exponential operator would become similar to that of linear operator, i.e., $\exp (\beta Q(s,a)) \approx 1+\beta Q$ if $|\beta|<\epsilon$. Hence we can always design some non-decreasing functions with parameter $\psi$ such that $g(Q(s,a);\psi) \ge \exp(\beta Q(s,a))$, e.g., $g(Q(s,a);\psi) = 1 + 2 Q(s,a)$ as $2\gg\beta$. Furthermore, the $Q$-value is bounded based on Lemma \ref{lemma:qbound}, therefore as long as $\beta$ is small enough, we can always find some non-decreasing functions that satisfy the constraint. Hence $|\mathcal{B}|>0$.
\end{proof}

After applying generalized-activated weighting operator, the value function would be updated by $Q_{t+1} = r_t + \gamma\mathbb{E}_{s^\prime\sim p(\cdot|s,a)}[GA_{g}(Q(s,a);\bm{\psi})]$, then by using Theorem \ref{theo:ga}, the distance of value functions induced by optimal operator and generalized-activate weighting operator is also bounded.

\begin{theorem}[Value Iteration]
\label{theo:value}
The distance between the optimal value function $V^*$ and the value function weighted by generalized-activated weighting operator is bounded at t-th iteration:
$$
\|V_t-V^*\|_\infty \le \gamma^t \|V_0(s) - V^*(s)\|_\infty + N(Q, \epsilon, \gamma;\psi,\beta),
$$
where $N(Q, \epsilon, \gamma;\psi,\beta) = \frac{T(Q^*)}{1-\gamma} +  \frac{1}{1-\gamma}\frac{\beta\epsilon+\int_{a\in\mathcal{A}}1da-1}{\beta}-\sum_{k=1}^t \gamma^{t-k}\frac{\min_s \ln F(Q_k(s,\pi(s)),s,\epsilon)}{\beta}$.
\end{theorem}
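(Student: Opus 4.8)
The plan is to read the generalized-activated update as an \emph{approximate} value iteration and to control its deviation from exact value iteration one step at a time, using Theorem~\ref{theo:ga} as the per-step error bound and the standard $\gamma$-contraction of the Bellman backup to propagate that error. Concretely, I would write the iterates as $V_t(s)=GA_g(Q_t(s,\pi(s));\bm{\psi})$ with $Q_{t+1}(s,a)=r(s,a)+\gamma\,\mathbb{E}_{s'\sim p(\cdot|s,a)}[V_t(s')]$, and the optimal quantities as $V^*(s)=\max_a Q^*(s,a)$ with $Q^*(s,a)=r(s,a)+\gamma\,\mathbb{E}_{s'}[V^*(s')]$. The target is a recursion of the form $\|V_t-V^*\|_\infty\le\gamma\|V_{t-1}-V^*\|_\infty+e_t$, where $e_t$ is the one-step activation error, which then unrolls into the claimed bound.

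First I would sandwich the activated value: Theorem~\ref{theo:ga} gives, for every state, $\max_a Q_t(s,a)-M(Q_t,\epsilon;\bm{\psi},\beta)\le V_t(s)\le\max_a Q_t(s,a)$. Next I would invoke two elementary facts: the max operator is non-expansive, $|\max_a Q_t(s,a)-\max_a Q^*(s,a)|\le\max_a|Q_t(s,a)-Q^*(s,a)|$, and the Bellman residual propagates as $Q_t(s,a)-Q^*(s,a)=\gamma\,\mathbb{E}_{s'}[V_{t-1}(s')-V^*(s')]$, so that $\max_a|Q_t(s,a)-Q^*(s,a)|\le\gamma\|V_{t-1}-V^*\|_\infty$. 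Combining the upper half of the sandwich with these two facts bounds $V_t(s)-V^*(s)$ from above by $\gamma\|V_{t-1}-V^*\|_\infty$; combining the lower half bounds it from below by $-\gamma\|V_{t-1}-V^*\|_\infty-M(Q_t,\epsilon;\bm{\psi},\beta)$. Taking absolute values and then the supremum over $s$ yields the recursion with $e_t=\sup_s M(Q_t,\epsilon;\bm{\psi},\beta)$.

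Finally I would unroll: iterating from $t$ down to $0$ gives $\|V_t-V^*\|_\infty\le\gamma^t\|V_0-V^*\|_\infty+\sum_{k=1}^t\gamma^{t-k}\,\sup_s M(Q_k,\epsilon;\bm{\psi},\beta)$. I would then split each $M$ into its state-independent part, $\epsilon+T(Q^*)+\frac{\int_{a\in\mathcal{A}}1\,da-1}{\beta}$, and its state-dependent log-measure part; the supremum over $s$ of the latter contributes $-\frac{\min_s\ln F(Q_k(s,\pi(s)),s,\epsilon)}{\beta}$. The state-independent part factors out of the sum and, using $\sum_{k=1}^t\gamma^{t-k}=\frac{1-\gamma^t}{1-\gamma}\le\frac{1}{1-\gamma}$, collapses into $\frac{T(Q^*)}{1-\gamma}+\frac{1}{1-\gamma}\frac{\beta\epsilon+\int_{a\in\mathcal{A}}1\,da-1}{\beta}$, while the log-measure part remains as the discounted sum $-\sum_{k=1}^t\gamma^{t-k}\frac{\min_s\ln F(Q_k(s,\pi(s)),s,\epsilon)}{\beta}$; together these reproduce $N(Q,\epsilon,\gamma;\psi,\beta)$.

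The main obstacle I anticipate is the bookkeeping around the state-dependent $\ln F$ term: because the per-step error $M$ varies with $s$ through $F(Q_k,s,\epsilon)$, one must carefully take the supremum over states inside $\|\cdot\|_\infty$ and verify it produces the $\min_s\ln F$ contribution with the correct sign, rather than prematurely replacing $M$ by a single constant. A secondary subtlety is that the activation systematically \emph{underestimates} the max (the left inequality of Theorem~\ref{theo:ga}), so the binding direction is the lower bound on $V_t-V^*$; I would confirm that the two-sided $\|\cdot\|_\infty$ estimate is nonetheless governed by the same additive error $e_t$, and that bounding $\frac{1-\gamma^t}{1-\gamma}$ by $\frac{1}{1-\gamma}$ for the constant terms while retaining the $\ln F$ terms as an exact discounted sum is precisely what yields the stated form of $N$.
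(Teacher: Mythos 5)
Your proposal is correct and follows essentially the same route as the paper's proof: a per-step error bound from Theorem~\ref{theo:ga} combined with the non-expansiveness of the max operator and the $\gamma$-contraction of the Bellman backup, then unrolled with the geometric sum $\sum_{k=1}^t\gamma^{t-k}\le\frac{1}{1-\gamma}$ applied to the state-independent part of $M$ while the $\ln F$ terms are kept as a discounted sum. Your explicit attention to the sign of the $\min_s\ln F$ contribution and to retaining the $\gamma^{t-k}$ factors in that sum is, if anything, slightly more careful than the paper's own bookkeeping.
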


\begin{proof}
Note that the optimal value estimation function $V^*$ is acquired by the max operator, i.e.,
\begin{equation}
    \label{eq:maxoptimal}
    V^*(s)=\max_{a\in\mathcal{A}} Q^*(s,a).
\end{equation}

By definition of generalized-activated weighting operator value iteration, we have
\begin{equation*}
    \label{eq:valexp}
    \begin{aligned}
    |V_{t+1} - V^*| &= |GA_g(Q(s,\pi(s));\psi) - \max_a Q^*(s,a)| \\
    &\le |GA_g(Q(s,\pi(s));\psi) - \max_a Q_{t+1}(s,a)| \\
    &\quad + |\max_a Q_{t+1}(s,a) - \max_a Q^*(s,a)|.
    \end{aligned}
\end{equation*}

By using Theorem \ref{theo:ga} and the fact that max operator is non-expansive (\cite{singh1994convergence}), we have $|\max_a Q_{t+1}(s,a) - \max_a Q^*(s,a)| = \max_a|Q_{t+1}(s,a) - Q^*(s,a)| \le \gamma \max_{s^\prime} |V_t(s^\prime) - V^*(s^\prime)|$. Then,
\begin{equation*}
    \label{eq:the2iter}
    \begin{aligned}
    \| V_{t+1}(s) - V^*(s) \|_\infty &\le \gamma \| V_t(s) - V^*(s) \|_\infty +\epsilon+T(Q^*)\\
    & \, + \frac{\int_{a\in\mathcal{A}}1da - 1 - \min_s \ln F(Q_{t+1},s,\epsilon)}{\beta}.
    \end{aligned}
\end{equation*}

Doing iteration, we have
\begin{equation*}
    \label{eq:the2con}
    \begin{aligned}
    &\| V_t(s) - V^*(s) \|_\infty \\ &\le \gamma^t \| V_0(s) - V^*(s) \|_\infty +\sum_{k=1}^t \gamma^{t-k}(\epsilon+T(Q^*)) \\ & \qquad- \sum_{k=1}^t \frac{\min_s \ln F(Q_{t+1},s,\epsilon)}{\beta} + \sum_{k=1}^t \frac{\int_{a\in\mathcal{A}}1da - 1}{\beta} \\
    & \le \gamma^t \| V_0(s) - V^*(s) \|_\infty + \frac{\epsilon+T(Q^*)}{1-\gamma} + \frac{\int_{a\in\mathcal{A}}1da - 1}{(1-\gamma)\beta} \\ & \qquad- \sum_{k=1}^t \frac{\min_s \ln F(Q_{t+1},s,\epsilon)}{\beta} \\
    &= \gamma^t \| V_0(s) - V^*(s) \|_\infty + N(Q, \epsilon, \gamma;\psi,\beta).
    \end{aligned}
\end{equation*}
\end{proof}

\noindent{\bf Remark 3:} The error bound would converge to $\frac{T(Q^*)+\epsilon}{1-\gamma}$ with iteration.

\section{Bias alleviation with Generalized-activated Deep Double Deterministic Policy Gradients}
\label{sec:easebias}

Since generalized-activated weighting operator would deviate the maximum value and lead to a softer value estimation, then will it help achieve an intermediate status between overestimation bias and underestimation bias? 

\subsection{Generalized-activated weighting operator upon single critic}
We start from applying the generalized-activated weighting operator on the value estimation function of DDPG, and we have GD2 algorithm (see Algorithm \ref{alg:alggd2} for full algorithm of GD2) which is a variant but efficient algorithm as guaranteed by the following Corollary:

\begin{corollary}
\label{coro:gd2}
 Suppose that the actor is a local maximizer with respect to the critic, then there exists noise clipping parameter $c > 0$ such that the bias of GD2 is smaller than that of DDPG, i.e. bias($\mathcal{T}_{GD2}$) $\le$ bias($\mathcal{T}_{DDPG}$).
\end{corollary}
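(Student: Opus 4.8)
The plan is to compare the two target estimators directly. First I would fix notation: let $\hat{Q}(s,a) = Q^{\mathrm{true}}(s,a) + e(s,a)$ denote the critic's approximation with approximation error $e$, and define the bias of an operator $\mathcal{T}$ at state $s$ as $\mathrm{bias}(\mathcal{T}) = \mathbb{E}[\mathcal{T}(s)] - \max_a Q^{\mathrm{true}}(s,a)$. Because DDPG bootstraps from $\hat{Q}(s,\pi(s))$ with the actor acting as a local maximizer of the critic, its target coincides with $\max_{a}\hat{Q}(s,a)$ over the relevant action neighborhood, and the classical overestimation argument (Jensen applied to the maximum of noisy estimates) gives $\mathrm{bias}(\mathcal{T}_{DDPG}) \ge 0$. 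The GD2 target, by contrast, replaces this maximum with the generalized-activated weighted average $GA_g$ taken over the action neighborhood induced by the clipped target-policy noise of radius $c$.

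The key inequality I would establish is the pointwise domination $\mathcal{T}_{GD2}(s) \le \mathcal{T}_{DDPG}(s)$. This follows from the upper-bound half of Theorem~\ref{theo:ga}: the generalized-activated operator never exceeds the maximum of $\hat{Q}$ over its integration domain, and the local-maximizer assumption identifies that maximum with $\hat{Q}(s,\pi(s))$, i.e.\ the DDPG value. Taking expectations then yields $\mathrm{bias}(\mathcal{T}_{GD2}) \le \mathrm{bias}(\mathcal{T}_{DDPG})$. Lemma~\ref{lemma:qbound} guarantees all quantities are finite, so the integrals and expectations are well defined.

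To pin down the role of $c$ and produce the required $c>0$, I would use that for every $c>0$ the clipped-noise support contains actions other than $\pi(s)$, all with $\hat{Q}$ value no larger than $\hat{Q}(s,\pi(s))$ by the local-maximizer assumption; hence the weighted average is \emph{strictly} below $\hat{Q}(s,\pi(s))$, giving $\mathcal{T}_{GD2}(s;c) < \mathcal{T}_{DDPG}(s)$ and thus a strictly reduced signed bias. To also control the magnitude of the bias I would invoke continuity of the map $c \mapsto \mathbb{E}[\mathcal{T}_{GD2}(s;c)]$ (dominated convergence, with dominating bound from Lemma~\ref{lemma:qbound}): since this map equals $\mathbb{E}[\mathcal{T}_{DDPG}(s)] > \max_a Q^{\mathrm{true}}(s,a)$ at $c=0$, there is a right neighborhood $c\in(0,c^*)$ on which $\mathbb{E}[\mathcal{T}_{GD2}(s;c)]$ stays above the true optimum, so the overestimation is reduced without flipping into underestimation. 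Any such $c$ proves the claim.

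The main obstacle I anticipate is making the local-maximizer assumption do genuine work against the \emph{noisy} critic. One must argue that within the clipped-noise support the actor's action attains (or approximately attains) the maximum of $\hat{Q}$ — not merely of the noise-free $Q^{\mathrm{true}}$ — so that the weighted average is provably dominated by the DDPG value; controlling the error $e(s,a)$ here, and ensuring the strict domination survives after taking the expectation over $e$, is the delicate point. By comparison the continuity and intermediate-value portion is routine once this pointwise domination is secured.
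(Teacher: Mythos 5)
Your proposal is correct and follows essentially the same route as the paper: both arguments cancel the common true-value term so that the bias difference reduces to $\mathbb{E}[\mathcal{T}_{GD2}(s)]-\mathbb{E}[\mathcal{T}_{DDPG}(s)]$, invoke the local-maximizer assumption (with suitable noise clipping) to identify the DDPG target with $\max_a Q(s,a;\theta^-)$, and then apply the $GA_g \le \max_a Q$ direction of Theorem~\ref{theo:ga}. The additional material on strict domination and continuity in $c$ goes beyond what the corollary asserts and is not needed for the stated inequality.
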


\begin{proof}
The bias caused by an operator $\mathcal{T}$ is: 
$
    \mathbb{E}[\mathcal{T}(s) - \mathbb{E}[Q(s,\pi(s;\phi^{-});\theta^{\prime})]]
$ where $\theta^\prime$ is the true parameter of value function. Then it is easy for us to find that $bias(\mathcal{T}_{GD2}) - bias(\mathcal{T}_{DDPG}) = \mathbb{E}[\mathcal{T}_{GD2}(s)] - \mathbb{E}[\mathcal{T}_{DDPG}(s)]$. By definition, we have
\begin{equation}
    \label{eq:oper}
    \begin{aligned}
    &\mathcal{T}_{DDPG}(s) = Q(s,\pi(s;\phi^{-});\theta^{-}), \\
    &\mathcal{T}_{GD2}(s) = GA_g(Q(s,\pi(s);\theta^-);\psi).
    \end{aligned}
\end{equation}

The actor is the local maximizer with respect to the critic, then there exist some noise clipping parameter such that for any state $s$, the selected action by the policy would induce to a local maximum, that is $Q(s,\pi(s;\phi^{-});\theta^{-}) = \max_a Q(s,a;\theta^-)$.

By using the left-hand-side inequality of Theorem 1, we have
\begin{equation}
    \label{eq:gd2}
    \begin{aligned}
    &\quad bias(\mathcal{T}_{GD2}) - bias(\mathcal{T}_{DDPG}) \\
    &= \mathbb{E}[GA_g(Q(s,\pi(s);\theta^-);\psi)] - \mathbb{E}[\max_a Q(s,a;\theta^-)] \\
    &\le \mathbb{E}[\max_a Q(s,a;\theta^-) - \max_a Q(s,a;\theta^-)] = 0.
    \end{aligned}
\end{equation}

Hence, $bias(\mathcal{T}_{GD2})\le bias(\mathcal{T}_{DDPG})$.

\end{proof}

Corollary~\ref{coro:gd2} reveals an important fact that we could ease overestimation bias with {\it any} non-decreasing activation function as weights compared with the max operator, i.e., generalized-activated weighted operator helps alleviate overestimation bias.

\begin{algorithm}[tb]
\caption{Generalized-activated Deep Deterministic Policy Gradient (GD2)}
\label{alg:alggd2}
\begin{algorithmic}[1] %[1] enables line numbers
\STATE Initialize critic networks $Q$ and actor networks $\pi$ with random parameters $\theta, \phi$; Initialize target network $\theta^\prime \leftarrow \theta, \phi^\prime \leftarrow \phi$. 
\STATE Initialize replay buffer $\mathcal{B} = \{\}$.
\STATE Given a non-decreasing function $g(\cdot;\bm{\psi})$ with parameter $\bm{\psi}$.
\FOR{$t$ = 1 to $T$}
\STATE Select action $a$ with Gaussian exploration noise $\epsilon$, $\epsilon\sim \mathcal{N}(0,\sigma)$ based on $\pi$ and observe reward $r$, new state $s^\prime$ and done flag $d$
\STATE Store transitions, i.e. $\mathcal{B}\leftarrow\mathcal{B}\bigcup \{(s,a,r,s^\prime,d)\}$
\STATE Sample $N$ transitions $\{(s_i,a_i,r_i,s_i^\prime,d_i)\}_{i=1}^N\sim\mathcal{B}$
\STATE $a^\prime\leftarrow \pi(s^\prime;\phi^\prime) + \epsilon$, $\epsilon\sim$ clip($\mathcal{N}(0,\bar{\sigma}),-c,c$)
\STATE Calculate $GA_g\left(Q(s^\prime,a^\prime); \bm{\psi} \right)$
\STATE $y_t \leftarrow r + \gamma(1-d) GA_g\left(Q(s^\prime,a^\prime);\bm{\psi} \right)$
\STATE Update critic network: \\ $\theta\leftarrow \arg\min_{\theta}\frac{1}{N}\sum_s (Q(s,a;\theta)-y)^2$
\STATE Update actor network: \\ $\nabla J_{\phi}(\phi) = \frac{1}{N}\sum_s \nabla_a Q(s,a;\theta_i)|_{a=\pi_{\phi}(s;\phi)}\nabla_{\phi}\pi(s;\phi)$
\STATE Update target networks: \\ $\theta^\prime \leftarrow \tau\theta + (1-\tau)\theta^\prime, \phi^\prime\leftarrow\tau\phi+(1-\tau)\phi^\prime$
\ENDFOR
\end{algorithmic}
\end{algorithm}

In order to illustrate the effectiveness of GD2 in overestimation bias reduction, we conduct experiments in one typical environment in MuJoCo (\cite{todorov2012mujoco}), Hopper-v2. We compare the performance of DDPG and GD2 with simple activations (e.g., polynomial activation and tanh activation). The polynomial activation has the following parameter setup: $k = 2, \alpha=0.05$ with bias $b=2$, and tanh activation has the parameter $\beta = 0.1$, $b=2$ (see Section \ref{sec:activatecomp} for detailed activation formula). We also record the value estimation bias of these two algorithms to investigate whether GD2 helps alleviate overestimation bias where the true values are evaluated by averaging the discounted long-term rewards acquired by rolling out the current policy that starts from the sampled states every $10^4$ steps.

\begin{figure*}
    \centering
    \subfigure[Estimation bias]{
    \label{fig:gd2hopperbias}
    \includegraphics[scale=0.5]{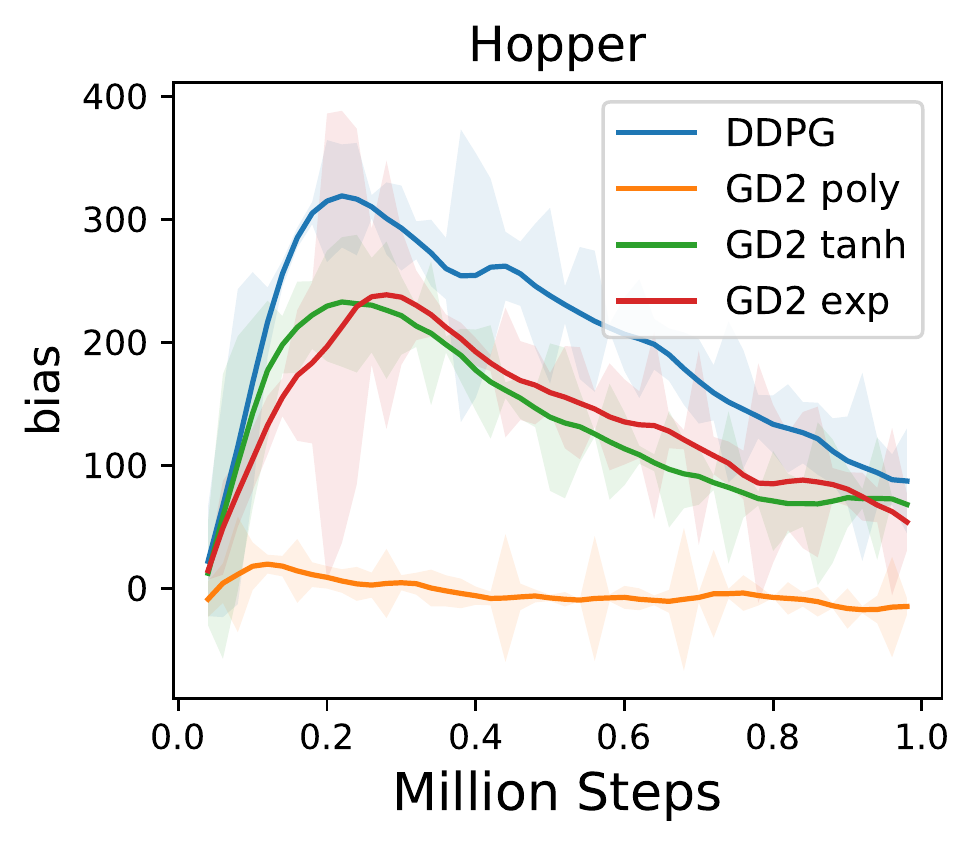}
    }\hspace{0mm}
    \subfigure[Performance]{
    \label{fig:gd2hopperperformance}
    \includegraphics[scale=0.5]{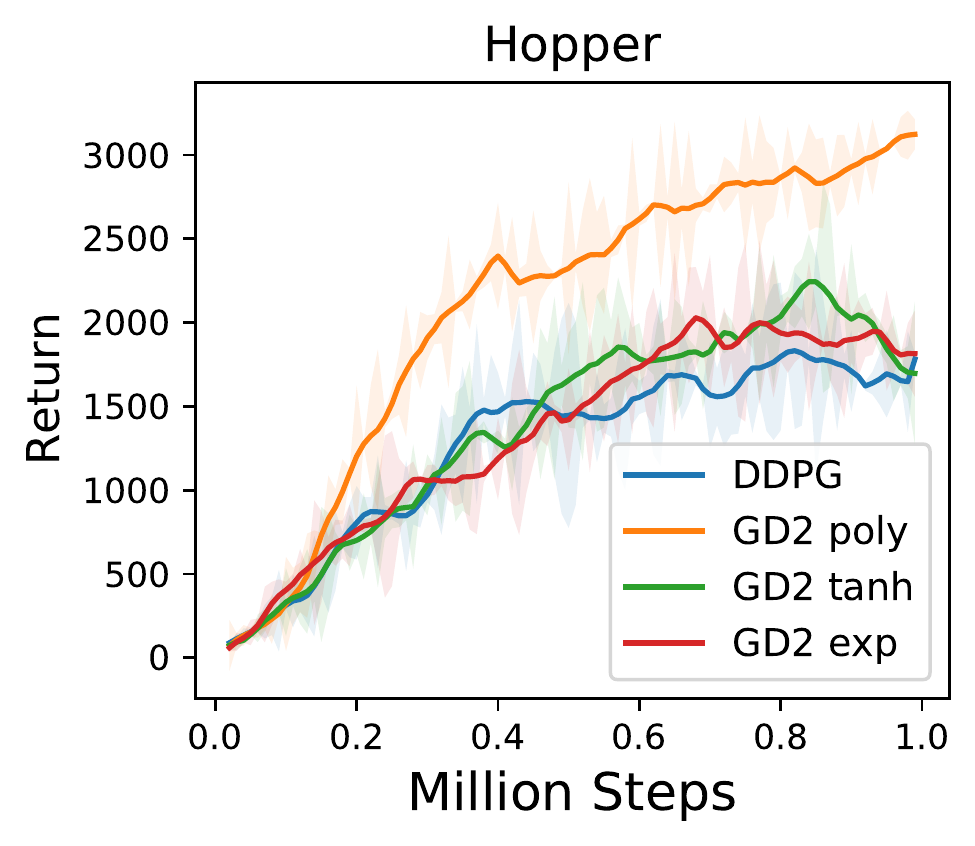}
    }\hspace{0mm}
    \caption{(a) Value estimation bias comparison, (b) Performance comparison between GD2 with different activations (polynomial activation, tanh activation, and exponential activation) and DDPG. The experiment is conducted on Hopper-v2 environment (5 runs, mean $\pm$ standard deviation). Note that exp denotes exponential activation, poly denotes polynomial activation, and tanh denotes tanh activation.}
    \label{fig:gd2hopper}
\end{figure*}

The experimental result is presented in Fig \ref{fig:gd2hopper}, where we observe significant estimation reduction compared to DDPG with the aid of generalized-activated weighting operator with different activation functions in Fig \ref{fig:gd2hopperbias}. GD2 significantly outperforms DDPG with activations like polynomial functions, or tanh functions as shown in Fig \ref{fig:gd2hopperperformance}. It is interesting to note that different activations have different performance and bias reduction effect on this task, e.g., polynomial activation has much less estimation bias and the best performance on Hopper-v2, which indicates that the optimal activation function is task-relevant or task-specific.

\subsection{Generalized-activated weighting operator upon double critics}
Generalized-activated weighting operator also helps with underestimation bias. Note that the direct application of generalized-activated weighting operator on the value estimation from two critics of TD3 would lead to larger underestimation bias because $\mathcal{T}_{GD2}(s^\prime) = GA_{g}(Q_i(s^\prime,a;\theta_i^\prime);\bm{\psi}) \le \max_a Q_i(s^\prime,a;\theta_i^\prime)$ based on Theorem \ref{theo:ga} where the right side is the value estimation by TD3. We therefore apply double actors and estimate the target value for critic $i$ by $y_i = r + \gamma \mathcal{T}_{GD3}(s^\prime)$ as in SD3 (\cite{pan2020softmax}), where $\mathcal{T}_{GD3} = GA_{g}(\hat{Q}_i(s^\prime,\pi(s^\prime));\bm{\psi})$ and $\hat{Q}_i(s^\prime,a) = \min_{i=1,2}\{Q_i(s^\prime,a;\theta_i^\prime)\}$. In this way, we can get an estimator with less bias than GD2 as guaranteed by Lemma \ref{lemma:gd2gd3}. Considering there exist integral in generalized-activated weighting operator, we rewrite $GA_{g}(\cdot)$ in terms of expectation with importance sampling to convert it into a manageable formula given by Eq. (\ref{eq:gd3target}) as suggested in (\cite{haarnoja2017reinforcement}).

\begin{equation}
    \label{eq:gd3target}
    \mathbb{E}_{a^\prime\sim p}\left[\frac{g(\hat{Q}(s^\prime,a^\prime;\bm{\psi})\hat{Q}(s^\prime,a^\prime))}{p(a^\prime)}\right]{\Big /}\mathbb{E}_{a^\prime\sim p}\left[\frac{g(\hat{Q}(s^\prime,a^\prime;\bm{\psi})}{p(a^\prime)}\right],
\end{equation}

where $p(a^\prime)$ is the Gaussian probability density function which depends on the clipped action values. The novel algorithm, \underline{G}eneralized-activated \underline{D}eep \underline{D}ouble \underline{D}eterministic Policy Gradients (GD3) is shown in Algorithm \ref{alg:alggd3}. 

A detailed computational graph description on the different updating process of TD3 and GD3 is available in Fig \ref{fig:td3gd3}. The general network architecture of GD3 is similar to that of TD3 while the update process is different. TD3 directly uses the minimal value estimation from two critic networks (Clipped Double Q-learning) while GD3 gives the value function an activation to create a new distribution that indicates the importance of each Q value and uses it as weights over action space to get a softer value estimation.

\begin{figure*}
    \centering
    \includegraphics[scale=0.4]{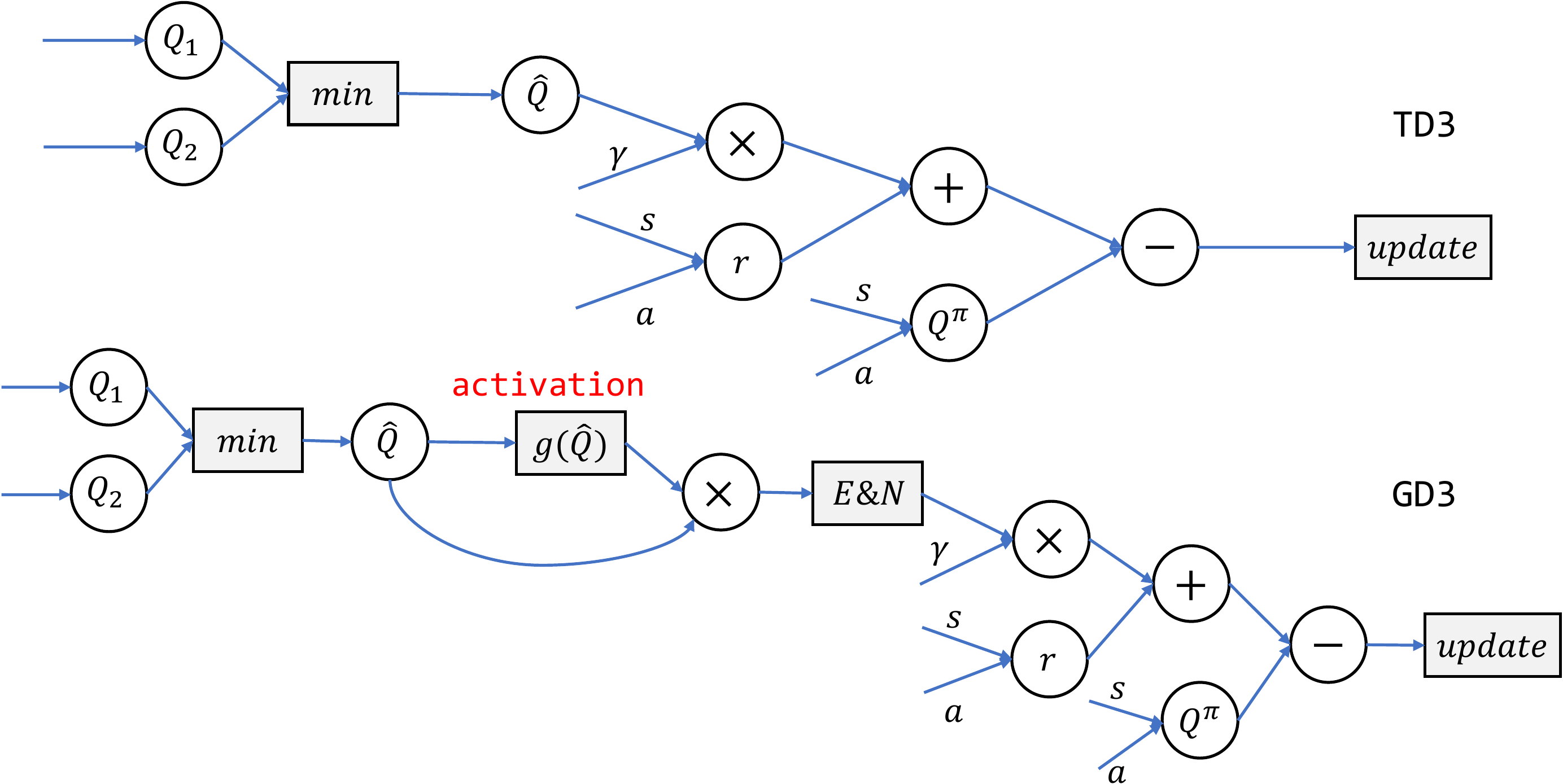}
    \caption{Graph describing the difference in updating between TD3 and GD3. E\&N denotes Expectation and Normalization.}
    \label{fig:td3gd3}
\end{figure*}

\begin{algorithm}[tb]
\caption{Generalized-activated Deep Double Deterministic Policy Gradients (GD3)}
\label{alg:alggd3}
\begin{algorithmic}[1] %[1] enables line numbers
\STATE Initialize critic networks $Q_{\theta_1}, Q_{\theta_2}$ and actor networks $\pi_{\phi_1}, \pi_{\phi_2}$ with random parameters $\theta_1, \theta_2, \phi_1, \phi_2$; Initialize target network $\theta_1^\prime \leftarrow \theta_1, \theta_2^\prime \leftarrow \theta_2, \phi_1^\prime \leftarrow \phi_1, \phi_2^\prime \leftarrow \phi_2$. Initialize replay buffer $\mathcal{B} = \{\}$.
\STATE Given a non-decreasing function $g(\cdot;\bm{\psi})$ with parameter $\bm{\psi}$.
\FOR{$t$ = 1 to $T$}
\STATE Select action $a$ with Gaussian exploration noise $\epsilon$, $\epsilon\sim \mathcal{N}(0,\sigma)$ based on current policy and observe reward $r$, new state $s^\prime$ and done flag $d$
\STATE Store transitions, i.e. $\mathcal{B}\leftarrow\mathcal{B}\bigcup \{(s,a,r,s^\prime,d)\}$
\FOR{$i = 1,2$}
\STATE Sample $N$ transitions $\{(s_j,a_j,r_j,s_j^\prime,d_j)\}_{j=1}^N\sim\mathcal{B}$
\STATE $a^\prime\leftarrow \pi_{\phi_i}(s^\prime;\phi_i^\prime) + \epsilon$, $\epsilon\sim$ clip($\mathcal{N}(0,\bar{\sigma}),-c,c$)
\STATE $\hat{Q}(s^\prime,a^\prime)\leftarrow \min_{j=1,2}\left(Q_j(s^\prime, a^\prime; \theta_j^\prime) \right)$
\STATE Calculate $GA_g\left(\hat{Q}(s^\prime,a^\prime); \bm{\psi} \right)$ based on Eq.(\ref{eq:gd3target})
\STATE $y_i \leftarrow r + \gamma(1-d) GA_g\left(\hat{Q}(s^\prime,a^\prime);\bm{\psi} \right)$
\STATE Update critic networks: \\ $\theta_i\leftarrow \arg\min_{\theta_i}\frac{1}{N}\sum_s (Q_i(s,a;\theta_i)-y_i)^2$
\STATE Update actor networks: \\ $\nabla J_{\phi_i}(\phi_i) = \frac{1}{N}\sum_s \nabla_a Q_i(s,a;\theta_i)|_{a=\pi_{\phi_i}(s;\phi_i)}\nabla_{\phi_i}\pi(s;\phi_i)$
\STATE Update target networks: \\ $\theta_i^\prime \leftarrow \tau\theta_i + (1-\tau)\theta_i^\prime, \phi_i^\prime\leftarrow\tau\phi_i+(1-\tau)\phi_i^\prime$
\ENDFOR
\ENDFOR
\end{algorithmic}
\end{algorithm}

\begin{lemma}
 \label{lemma:gd2gd3}
 The bias of GD3 is smaller than that of GD2, i.e., bias($\mathcal{T}_{GD3}$) $\le$ bias($\mathcal{T}_{GD2}$).
\end{lemma}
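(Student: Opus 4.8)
The plan is to follow the template of the proof of Corollary~\ref{coro:gd2}. Since both biases are measured against the \emph{same} reference quantity $\mathbb{E}[Q(s,\pi(s;\phi^{-});\theta^{\prime})]$, this true-value term cancels in the difference, so that
$$
\mathrm{bias}(\mathcal{T}_{GD3}) - \mathrm{bias}(\mathcal{T}_{GD2}) = \mathbb{E}\big[\mathcal{T}_{GD3}(s)\big] - \mathbb{E}\big[\mathcal{T}_{GD2}(s)\big].
$$
It therefore suffices to establish the pointwise operator inequality $\mathcal{T}_{GD3}(s) \le \mathcal{T}_{GD2}(s)$ for every state $s$, after which taking expectations closes the argument.

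Next I would unpack the two operators. Writing $\mathcal{T}_{GD2}(s) = GA_g(Q(s,\pi(s));\bm{\psi})$ for the single critic $Q$ and $\mathcal{T}_{GD3}(s) = GA_g(\hat{Q}(s,\pi(s));\bm{\psi})$ with $\hat{Q}(s,a) = \min_{i=1,2}\{Q_i(s,a;\theta_i^{\prime})\}$, I would identify the GD2 critic with one of the two GD3 critics, so that the clipped estimate dominates it from below pointwise: $\hat{Q}(s,a) \le Q(s,a)$ for all $a \in \mathcal{A}$. The lemma then reduces entirely to the claim that the generalized-activated weighting operator respects this pointwise order, namely $GA_g(\hat{Q}(s,\pi(s));\bm{\psi}) \le GA_g(Q(s,\pi(s));\bm{\psi})$.

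The core of the work, and the step I expect to be the main obstacle, is exactly this operator monotonicity, because $GA_g$ is a \emph{self-weighted} average: both the integrand $g(Q)Q$ and the normalizer $\int_{a}g(Q)\,da$ depend on the very function being compared. The naive estimate only controls the numerator (from $g\ge 0$ and $\hat{Q}\le Q$ one gets $\int g(\hat{Q})\hat{Q}\le\int g(\hat{Q})Q$), but moving from the $\hat{Q}$-induced weights to the $Q$-induced weights also shifts the normalizer, so a purely pointwise comparison does not suffice. My plan is to interpolate $f_t=\hat{Q}+t(Q-\hat{Q})$ for $t\in[0,1]$, set $\Phi(t)=GA_g(f_t)$, and attempt to show $\Phi'(t)\ge 0$; differentiating the quotient and using the non-decreasing property $g'\ge 0$ reorganizes $\Phi'(t)$ into a manifestly nonnegative part $\int g(f_t)(Q-\hat{Q})\,da$ plus a covariance-type term pairing the nonnegative increment $Q-\hat{Q}$ with the centred values $f_t-\Phi(t)$ under the activation weights.

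I would then confront the delicate point head-on: that covariance term is not automatically sign-definite, since enlarging a small value of $f_t$ drags extra weight toward it and can \emph{lower} the weighted average. Hence the rigorous argument must either invoke an FKG/Chebyshev-type correlation inequality under the activation measure or lean on the structural assumptions already in play for the actor (local maximizer with bounded noise clipping $c>0$), which guarantee that the clipping leaves the maximizing region of $Q$ essentially intact while shrinking values away from it — precisely the alignment that makes the covariance nonnegative. Assembling the interpolation bound over $t\in[0,1]$ yields $GA_g(\hat{Q})\le GA_g(Q)$, and substituting back gives $\mathrm{bias}(\mathcal{T}_{GD3})\le\mathrm{bias}(\mathcal{T}_{GD2})$, as required.
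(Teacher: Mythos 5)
Your setup matches the paper's: both biases share the same true-value reference term, so the comparison reduces to showing $\mathbb{E}[\mathcal{T}_{GD3}(s)]\le\mathbb{E}[\mathcal{T}_{GD2}(s)]$, and both you and the paper then use $\hat{Q}=\min_{i=1,2}Q_i\le Q$ together with the monotonicity of $g$. The divergence is at the normalizer. The paper's proof explicitly ``neglect[s] the normalization term'' on the grounds that the two normalizers are similar, and then compares only the unnormalized integrands, concluding from $\hat{Q}\le Q$ and $g$ non-decreasing that $g(\hat{Q})\hat{Q}\le g(Q)Q$ pointwise; it never engages with the self-weighting issue you raise. You instead keep the full operator and try to prove the order-preservation $GA_g(\hat{Q})\le GA_g(Q)$, correctly identifying that this is where all the difficulty lives.

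The problem is that the step you flag as delicate is not merely delicate --- it is false in general, so neither the interpolation argument nor an FKG/Chebyshev-type inequality can close it without extra hypotheses. Take two actions with $Q=(0,10)$ and $\hat{Q}=(-100,10)$, so $\hat{Q}\le Q$ pointwise, and $g=\exp$: then $GA_g(Q)=10e^{10}/(1+e^{10})<10$ while $GA_g(\hat{Q})\approx 10$, so $GA_g(\hat{Q})>GA_g(Q)$. Lowering an already-low value strips weight from it and can raise the self-weighted average --- exactly the non-sign-definite covariance term you describe. Your fallback appeal to the local-maximizer and noise-clipping structure is never developed into an argument, and the lemma as stated carries no hypothesis (such as Condition~\ref{cond:ineq}) that would exclude such configurations. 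So your proposal, on its own terms, does not reach the conclusion; to land where the paper does you would have to either adopt its normalizer-cancellation approximation (which is an assumption, not a deduction) or add and verify a correlation-type condition relating $g$, $Q$, and $Q-\hat{Q}$. In diagnosing this obstruction you have, in fact, exposed a genuine gap in the paper's own proof, which silently discards the very term that causes the failure.
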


\begin{proof}
 Suppose the activation function is identical for GD2 and GD3, and we denote it as $g(\cdot;\psi)$. We neglect the normalization term here as the integral of activated value function over action space gives similar results with identical activation function. There is no much difference between the value estimation from two critic networks. The bias of GD2 gives
 \begin{equation*}
     bias(\mathcal{T}_{GD2}) = \mathbb{E}\left[g(Q;\psi)Q(s,a^\prime) \right] - \mathbb{E}\left[Q(s,\pi(s;\phi_1^\prime);\theta^\prime) \right],
 \end{equation*}
 
 where $\theta^\prime$ is the true parameter. GD3 uses two critic networks and it takes the minimal value estimation, i.e.,
 \begin{equation*}
     \hat{Q}(s,\hat{a}) = \min_{i=1,2}Q_i(s,\hat{a};\theta_i^\prime).
 \end{equation*}
 
Obviously, $Q(s,a^\prime) \le \hat{Q}(s,\hat{a})$. Hence, the bias of GD3 gives
 \begin{equation*}
     bias(\mathcal{T}_{GD3}) = \mathbb{E}\left[g(\hat{Q};\psi)\hat{Q}(s,\hat{a}^\prime) \right] - \mathbb{E}\left[Q(s,\pi(s;\phi_i^\prime);\theta^\prime) \right].
 \end{equation*}
 
 Because the activation function $g(\cdot;\psi)$ is non-decreasing, it is easy to find $g(Q(s,a^\prime);\psi) \le g(\hat{Q}(s,\hat{a});\psi)$. Therefore, $g(Q;\psi)Q(s,a^\prime) \le g(\hat{Q};\psi)\hat{Q}(s,\hat{a}^\prime)$ which induce to the conclusion,
 \begin{equation*}
     bias(\mathcal{T}_{GD3}) \le bias(\mathcal{T}_{GD2}).
 \end{equation*}
\end{proof}

Lemma \ref{lemma:gd2gd3} paves the way for Theorem \ref{theo:gd3} which says that with proper activation function engineering satisfying Condition \ref{cond:ineq}, it is ensured that we could find a suitable activation function which could result in softer value estimation, i.e., it would alleviate both large overestimation bias and underestimation bias. Many function families meet the constraint, e.g. exponential family, linear family, etc.

\begin{condition}
\label{cond:ineq}
$\int_{a\in\mathcal{A}} g(Q(s,a);\psi) Q(s,a)da \int_{a^\prime\in\mathcal{A}}1da^\prime \ge \int_{a\in\mathcal{A}}g(Q(s,a);\psi)da\int_{a^\prime\in\mathcal{A}}Q(s,a^\prime)da^\prime.$
\end{condition}

\begin{theorem}[GD3 alleviates estimation bias]
\label{theo:gd3}
Suppose that the actor is a local maximizer with respect to the critic, then there exists noise clipping parameter $c > 0$ and some non-decreasing functions such that bias($\mathcal{T}_{TD3}$) $\le$ bias($\mathcal{T}_{GD3}$) $\le$ bias($\mathcal{T}_{DDPG}$).
\end{theorem}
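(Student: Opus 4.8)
The plan is to split the chain bias$(\mathcal{T}_{TD3}) \le$ bias$(\mathcal{T}_{GD3}) \le$ bias$(\mathcal{T}_{DDPG})$ into its two constituent inequalities and handle each separately. Since every bias term has the form $\mathbb{E}[\mathcal{T}(s)] - \mathbb{E}[Q(s,\pi(s;\phi^-);\theta')]$ and the subtracted true value is common to all operators under comparison, each inequality reduces to comparing the expected operator outputs $\mathbb{E}[\mathcal{T}(s)]$. This is precisely the reduction already exploited in Corollary~\ref{coro:gd2} and Lemma~\ref{lemma:gd2gd3}.

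For the right inequality bias$(\mathcal{T}_{GD3}) \le$ bias$(\mathcal{T}_{DDPG})$, I would simply chain the two results already in hand: Lemma~\ref{lemma:gd2gd3} gives bias$(\mathcal{T}_{GD3}) \le$ bias$(\mathcal{T}_{GD2})$, and Corollary~\ref{coro:gd2} gives bias$(\mathcal{T}_{GD2}) \le$ bias$(\mathcal{T}_{DDPG})$ under the local-maximizer hypothesis and a suitable clipping parameter $c$. Transitivity closes this half with no new work, and it holds for \emph{every} non-decreasing activation.

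The left inequality bias$(\mathcal{T}_{TD3}) \le$ bias$(\mathcal{T}_{GD3})$ is the substantive part and where I expect the main difficulty. I would first write the two operators explicitly: $\mathcal{T}_{TD3}(s) = \min_{i=1,2} Q_i(s,\pi(s;\phi_i^\prime);\theta_i^\prime)$ (Clipped Double Q at the policy action) and $\mathcal{T}_{GD3}(s) = GA_g(\hat Q(s,\pi(s));\psi)$ with $\hat Q = \min_i Q_i$, so the goal is $\mathbb{E}[\mathcal{T}_{TD3}(s)] \le \mathbb{E}[GA_g(\hat Q(s,\pi(s));\psi)]$. The key tool is Condition~\ref{cond:ineq}: dividing its inequality through by the positive quantity $\int_{a\in\mathcal{A}} g(Q;\psi)\,da \cdot \int_{a'\in\mathcal{A}} 1\,da'$ shows it is exactly the statement $GA_g(\hat Q;\psi) \ge \bigl(\int_{a\in\mathcal{A}}\hat Q(s,a)\,da\bigr)/\bigl(\int_{a'\in\mathcal{A}}1\,da'\bigr)$, i.e. the activated estimate dominates the uniform mean of $\hat Q$ over the action set. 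This correlation/Chebyshev-type inequality in fact holds automatically for any non-decreasing $g$, because $g(\hat Q)$ and $\hat Q$ are comonotone; combined with the upper bound $GA_g(\hat Q;\psi) \le \max_a \hat Q$ from Theorem~\ref{theo:ga}, the activated estimate is pinned in the interval between $\mathrm{mean}_a \hat Q$ and $\max_a \hat Q$.

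To finish, I would argue that under the local-maximizer assumption the TD3 estimate satisfies $\mathcal{T}_{TD3}(s) \le \max_a \hat Q(s,a)$, and that by tuning the steepness of the activation (for instance sending $\beta \to \infty$ within a family satisfying $g(Q;\psi) \ge \exp(\beta Q)$ as in Theorem~\ref{theo:ga}) the value $GA_g(\hat Q;\psi)$ moves continuously from the uniform mean toward $\max_a \hat Q$. Hence there exists a non-decreasing activation for which $GA_g(\hat Q;\psi) \ge \mathcal{T}_{TD3}(s)$ pointwise, and taking expectations yields the claim. The hard part will be making this existence/continuity argument airtight: one must rule out the degenerate case where the clipped-min estimate already equals $\max_a \hat Q$ (in which no activation can beat it), and control the slack between the actor's action and the true maximizer of $\hat Q$ through the clipping parameter $c$ — this is exactly where the ``there exists $c>0$ and some non-decreasing function'' quantifier in the statement does its work.
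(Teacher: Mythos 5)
Your right-hand inequality is exactly the paper's argument: chain Lemma~\ref{lemma:gd2gd3} with Corollary~\ref{coro:gd2} and use transitivity. No issues there. Your observation that Condition~\ref{cond:ineq} is a Chebyshev-type comonotonicity inequality that holds \emph{automatically} for every non-decreasing $g$ is also correct, and is in fact stronger than what the paper establishes (the paper only verifies the condition for the linear and exponential families by showing $\nabla_\psi GA_g \ge 0$ via Cauchy--Schwarz).

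The gap is in how you finish the left-hand inequality. The paper's proof writes $\mathrm{bias}(\mathcal{T}_{GD3}) - \mathrm{bias}(\mathcal{T}_{TD3}) = \mathbb{E}[GA_g(\hat{Q}_i(s,a);\psi)] - \mathbb{E}[\hat{Q}_i(s,a)]$ and identifies $\mathbb{E}[\hat{Q}_i(s,a)]$ with the \emph{unweighted average} $\int_{a'}\hat{Q}(s,a')da' / \int_{a'}1\,da'$ over the action (noise) distribution --- i.e., the TD3 target with target-policy smoothing is treated as a plain mean of $\hat{Q}$, so Condition~\ref{cond:ineq} is literally the statement $\mathbb{E}[GA_g(\hat{Q})] \ge \mathbb{E}[\hat{Q}]$ and the proof terminates immediately. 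You instead model $\mathcal{T}_{TD3}(s)$ as a point evaluation $\min_i Q_i(s,\pi(s;\phi'))$ near $\max_a \hat{Q}(s,a)$ and try to push $GA_g(\hat{Q})$ up toward it by sending $\beta\to\infty$. That route cannot close: Theorem~\ref{theo:ga} gives $GA_g(\hat{Q}) \le \max_a \hat{Q}(s,a)$ for \emph{every} admissible activation, so whenever the clipped-min estimate sits at (or within the limiting gap $\epsilon + T(Q^*)$ of) the maximum, no choice of activation dominates it pointwise; moreover a single activation must work uniformly over all states, which your continuity argument does not supply. You flag this degenerate case yourself but leave it open, so as written the left inequality is not established. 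The fix is to adopt the paper's reading of the TD3 target as an average over the smoothing-noise action distribution, after which your own comonotonicity observation finishes the job for any non-decreasing $g$.
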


\begin{proof}
See Appendix A.2.
\end{proof}

Note that an additional positive bias term $b>0$ could lead to reduced bias as stated in the following corollary:
\begin{corollary}
\label{coro:bias}
If some non-decreasing functions $g(\cdot;\bm{\psi})$ satisfy Condition~\ref{cond:ineq}, then a non-negative bias term would lead to a softer value estimation, i.e. the bias would be further reduced.
\end{corollary}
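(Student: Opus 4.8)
The plan is to compare the generalized-activated weighting operator built on the activation $g(\cdot;\bm{\psi})$ against the one built on the shifted activation $g(\cdot;\bm{\psi}) + b$ with $b > 0$, and to show that the bias term drives the estimate toward the unweighted mean (and hence further away from the max), which is precisely the softening claimed. First I would introduce shorthand for the four integrals that appear, writing $A = \int_{a\in\mathcal{A}} g(Q(s,a);\bm{\psi})Q(s,a)\,da$, $B = \int_{a\in\mathcal{A}} g(Q(s,a);\bm{\psi})\,da$, $C = \int_{a\in\mathcal{A}} Q(s,a)\,da$, and $D = \int_{a\in\mathcal{A}} 1\,da$. With this notation the original operator is $GA_g = A/B$, the unweighted mean is $C/D$, and replacing $g$ by $g+b$ yields $GA_{g+b} = (A + bC)/(B + bD)$, since the additive constant $b$ passes identically through both the numerator and the normalizing denominator.

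The core step is to establish $GA_{g+b} \le GA_g$. Cross-multiplying by the positive quantities $B$ and $B + bD$ reduces $(A + bC)/(B + bD) \le A/B$ to $B(A + bC) \le A(B + bD)$; after canceling the common term $AB$ and dividing by $b > 0$ this is exactly $BC \le AD$, which is Condition~\ref{cond:ineq}. Hence, whenever the chosen non-decreasing activation satisfies Condition~\ref{cond:ineq}, shifting it up by any positive bias $b$ can only decrease the value estimate.

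To make the word \emph{softer} precise I would then examine the dependence on $b$ directly. Differentiating gives $\frac{d}{db}\frac{A + bC}{B + bD} = \frac{BC - AD}{(B + bD)^2} \le 0$ under Condition~\ref{cond:ineq}, so $GA_{g+b}$ is non-increasing in $b$, coinciding with $GA_g$ at $b = 0$ and converging to the mean $C/D$ as $b \to \infty$. Since Theorem~\ref{theo:ga} already places $GA_g$ below $\max_a Q(s,a)$, pushing the estimate further down toward the mean enlarges the gap to the max and therefore further alleviates the overestimation bias, which is exactly the claimed softening.

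The main obstacle I anticipate is not the algebra but pinning down the intended reading of ``adding a bias term,'' namely that $b$ enters as $g \mapsto g + b$ and thus appears with the same weight in the normalizing denominator; once this is fixed, the statement collapses to Condition~\ref{cond:ineq}. A secondary point to verify is the strict positivity of the denominators $B$ and $B + bD$, so that cross-multiplication preserves the direction of the inequality; this follows because $g$ is non-decreasing and, after a suitable shift, may be taken positive over the bounded range of $Q$ guaranteed by Lemma~\ref{lemma:qbound}.
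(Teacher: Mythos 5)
Your proposal is correct and follows essentially the same route as the paper: both compare $GA_g$ with $GA_{g+b}$, reduce the inequality $GA_{g+b}\le GA_g$ to the cross-multiplied form $BC\le AD$, and invoke Condition~\ref{cond:ineq}; the paper writes the difference of the two fractions explicitly as $\frac{b(AD-BC)}{B(B+bD)}\ge 0$ rather than cross-multiplying, which is the same algebra. Your added derivative-in-$b$ argument and the limit toward the mean are consistent extras (the latter is the content of Corollary~\ref{coro:zero}) but not a different method.
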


\begin{proof}
Suppose the non-decreasing function $g(\cdot;\psi)$ satisfies Condition \ref{cond:ineq}, let $b\ge 0$ be the bias term and denote $g^\prime(\cdot;\psi) = g(\cdot;\psi) + b$. Then,
\begin{equation*}
    \label{eq:biasreduce}
    \begin{aligned}
    & bias(\mathcal{T}_{GD3}(s;g)) - bias(\mathcal{T}_{GD3}(s;g^\prime))  \\
    & = \mathbb{E}\left[GA_g(\hat{Q}(s,\cdot);\psi) \right] - \mathbb{E}\left[GA_{g^\prime}(\hat{Q}(s,\cdot);\psi) \right] \\
    & = \frac{\int_{a\in\mathcal{A}}g(Q;\psi)Q(s,a)da}{\int_{a^\prime\in\mathcal{A}}g(Q(s,a^\prime);\psi)da^\prime} -
     \frac{\int_{a\in\mathcal{A}}(g(Q;\psi)+b)Q(s,a)da}{\int_{a^\prime\in\mathcal{A}}(g(Q(s,a^\prime);\psi)+b)da^\prime} \\
     &= \frac{b\left(\int_{a\in\mathcal{A}} g(Q(s,a);\psi) Q(s,a)da \int_{a^\prime\in\mathcal{A}}1da^\prime\right)}{\int_{a^\prime\in\mathcal{A}}g(Q(s,a^\prime);\psi)da^\prime\int_{a^\prime\in\mathcal{A}}(g(Q(s,a^\prime);\psi)+b)da^\prime} - \\
     & \qquad \frac{b\left(\int_{a\in\mathcal{A}}g(Q(s,a);\psi)da\int_{a^\prime\in\mathcal{A}}Q(s,a^\prime)da^\prime\right)}{\int_{a^\prime\in\mathcal{A}}g(Q(s,a^\prime);\psi)da^\prime\int_{a^\prime\in\mathcal{A}}\left[g(Q(s,a^\prime);\psi)+b\right]da^\prime}.
    \end{aligned}
\end{equation*}

Since $b\ge 0$ and by using Condition \ref{cond:ineq}, we get that
$$
bias(\mathcal{T}_{GD3}(s;g)) \ge bias(\mathcal{T}_{GD3}(s;g^\prime)).
$$
Thus, we get the conclusion that an additional non-negative bias term would lead to a softer value estimation.
\end{proof}

However, too large bias term is undesired based on the following corollary.
\begin{corollary}
\label{coro:zero}
The Generalized-activated weighting operator approximates mean operator with sufficiently large bias term, i.e., $\lim_{b\rightarrow +\infty}GA_g(Q(s,\pi(s));\psi,b)=\mathbb{E}_{a\in\mathcal{A}}[Q(s,a)]$.
\end{corollary}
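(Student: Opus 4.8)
The plan is to substitute the shifted activation $g'(\cdot;\psi) = g(\cdot;\psi) + b$ (as in the proof of Corollary~\ref{coro:bias}) directly into the definition of the generalized-activated weighting operator and then evaluate the limit by an elementary leading-order analysis in $b$. First I would expand both the numerator and the denominator into a $b$-independent part and a part linear in $b$, writing
\[
GA_g(Q(s,\pi(s));\psi,b) = \frac{\int_{a\in\mathcal{A}} g(Q(s,a);\psi)\,Q(s,a)\,da + b\int_{a\in\mathcal{A}} Q(s,a)\,da}{\int_{a'\in\mathcal{A}} g(Q(s,a');\psi)\,da' + b\int_{a'\in\mathcal{A}} 1\,da'}.
\]

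The key step is to divide both the numerator and the denominator by $b$ and send $b\to+\infty$. The two $b$-free integrals $\int_{a\in\mathcal{A}} g(Q)\,Q\,da$ and $\int_{a'\in\mathcal{A}} g(Q)\,da'$ are finite: by Lemma~\ref{lemma:qbound} the value $Q(s,a)$ lies in the compact interval $[-R_{\mathrm{max}}/(1-\gamma),\,R_{\mathrm{max}}/(1-\gamma)]$, so $g(Q(s,a);\psi)$ is bounded on $\mathcal{A}$, and $\mathcal{A}$ has finite measure $\int_{a'\in\mathcal{A}} 1\,da'$. Consequently, after dividing by $b$, the contributions of these two integrals are $O(1/b)$ and vanish in the limit, leaving only the terms that were multiplied by $b$.

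What survives is exactly $\int_{a\in\mathcal{A}} Q(s,a)\,da \big/ \int_{a'\in\mathcal{A}} 1\,da'$, the average of $Q(s,\cdot)$ under the uniform measure on $\mathcal{A}$, i.e. $\mathbb{E}_{a\in\mathcal{A}}[Q(s,a)]$, which is the claimed limit. The only point requiring genuine care — and the one I would emphasize as the main obstacle — is the finiteness and boundedness of the two $b$-independent integrals, since this is precisely what justifies discarding the $O(1/b)$ remainders; that finiteness follows immediately from the boundedness of $Q$ in Lemma~\ref{lemma:qbound} together with the finite measure of $\mathcal{A}$. Notably, no interchange of limit and integral is needed: once the expansion is performed, $b$ enters only through explicit scalar coefficients, so the computation reduces to an elementary limit of a ratio of affine functions of $b$.
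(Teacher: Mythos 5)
Your proof is correct and follows essentially the same route as the paper: substitute the biased activation $g(\cdot;\psi)+b$ into the definition of $GA_g$ and let $b\to+\infty$. In fact your version is more careful than the paper's, which simply asserts that the activation ``degenerates to a constant function'' for large $b$, whereas you explicitly separate the numerator and denominator into $b$-free and $b$-linear parts and justify discarding the $O(1/b)$ remainders via the boundedness of $Q$ (Lemma~\ref{lemma:qbound}) and the finite measure of $\mathcal{A}$.
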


\begin{proof}
By the definition of Generalized-activated weighting operator, if we add a bias term in the activation function, then
\begin{equation*}
     GA_{g}(Q(s,\pi(s));\bm{\psi},b) = \int_{a\in\mathcal{A}}\frac{(g(Q(s,a);\bm{\psi})+b)Q(s,a)}{\int_{a^\prime\in\mathcal{A}}(g(Q(s,a^\prime);\bm{\psi})+b)da^\prime}da.
 \end{equation*}
 
 With sufficiently large bias term, the activation function would degenerate to constant function. It is then easy to conclude that $\lim_{b\rightarrow +\infty}GA_g(Q(s,\pi(s));\psi,b)=\mathbb{E}_{a\in\mathcal{A}}[Q(s,a)]$.
\end{proof}

Actually, an additional bias term is essential as the activation could be generalized to be non-decreasing in $(0,+\infty)$ instead of $\mathbb{R}$ as long as we perform a translation transformation, i.e., add a constant value such that the value function is non-negative. Hence the constraint is further loosened with a suitable bias term. However, too large bias term would equally weigh Q values, which may induce unsatisfying results (see Section \ref{sec:ablation}). Furthermore, we experimentally find out that there is no need to strictly follow Condition \ref{cond:ineq} in practice as simple activations are adequate for a near-optimal estimation which is discussed in detail in Section \ref{sec:activatecomp}.

\section{Experiments}
\label{sec:experiment}
In this part, we conduct experiments on challenging MuJoCo (\cite{todorov2012mujoco}) and Box2d (\cite{catto2011box2d}) environments in OpenAI Gym (\cite{brockman2016openai}) with simple activations. GD3 shares network configuration with the default setting of TD3. For the polynomial activation function, we mainly consider the index term $k$ to be 2 or 3 for simplicity. The coefficient $\alpha$ is mainly chosen from $\{0.01, 0.05, 0.5, 0.1 \}$. For the tanh and exponential activation, the coefficient $\beta$ is mainly chosen from $\{0.005, 0.05, 0.1\}$. The bias term $b$ for all activations is chosen from $\{0, 1, 2, 5\}$ using grid search. All algorithms are run with 5 seeds and the performance is evaluated for 10 times every 5000 timesteps.

The network configuration for DDPG, TD3, and GD3 are similar, where we use fine-tuned DDPG instead of the vanilla DDPG as is suggested in (\cite{TD3}). The detailed hyperparameters for baseline algorithms and GD3 are shown in Table \ref{tab:hyperparameter}. We use the parameter suggested in (\cite{TD3}) for Humanoid-v2 where all algorithms fail if we use the same hyperparameters setup as other environments. We use the open-source implementation of SAC (Soft Actor Critic) in Github by the author (\href{https://github.com/haarnoja/sac}{https://github.com/haarnoja/sac}) where the hyperparameters are followed the same way as those in TD3 paper (\cite{fujimoto2018addressing}). The task-specific optimal activation parameter setup are shown in Table \ref{tab:activation}. Notice that the optimal activations are mostly polynomial, which indicates that fine-tuning the polynomial activation function can guarantee a satisfying performance on most of the tasks (see Fig \ref{fig:gd3result} for more detailed performance of polynomial activation). It also reveals that simple activation functions are adequate for satisfying performance.

\begin{table}
\centering
\begin{tabular}{lrr}
\toprule
\textbf{Hyperparameter}  & \textbf{Humanoid-v2} & \textbf{Other tasks} \\
\midrule
Shared & \\
\qquad Actor network  & $(256,256)$ & $(400,300)$ \\
\qquad Critic network & $(256,256)$ & $(400,300)$ \\
\qquad Batch size & $256$ & $100$ \\
\qquad Learning rate & $3\times 10^{-4}$ & $10^{-3}$ \\
\qquad Optimizer & \multicolumn{2}{c}{Adam} \\
\qquad Discount factor & \multicolumn{2}{c}{$0.99$} \\
\qquad Replay buffer size & \multicolumn{2}{c}{$10^6$}  \\
\qquad Warmup steps & \multicolumn{2}{c}{$10^4$} \\
\qquad Exploration noise & \multicolumn{2}{c}{$\mathcal{N}(0,0.1)$} \\
\qquad Noise clip & \multicolumn{2}{c}{$0.5$} \\
\qquad Target update rate & \multicolumn{2}{c}{$5\times 10^{-3}$} \\
\midrule
TD3  & \\
\qquad Target update interval & \multicolumn{2}{c}{$2$} \\
\qquad Target nosie & \multicolumn{2}{c}{$0.2$} \\
\midrule
GD3 & \\
\qquad Number of noises & \multicolumn{2}{c}{$50$} \\
\qquad Action noise variance & \multicolumn{2}{c}{$0.2$} \\
\bottomrule
\end{tabular}
\caption{Hyperparameters setup for baseline algorithms and GD3.}
\label{tab:hyperparameter}
\end{table}

\begin{table}
\centering
\begin{tabular}{lrrrr}
\toprule
\textbf{Environment}  & \textbf{activation} & \textbf{coef} & \textbf{index} & \textbf{bias} \\
\midrule
Ant-v2 & poly & $0.05$ & $3$ & $2$ \\ 
BipedalWalker-v3 & poly & $0.05$ & $2$ & $2$ \\
HalfCheetah-v2 & poly & $0.05$ & $3$ & $2$ \\
Hopper-v2 & poly & $0.05$ & $2$ & $2$ \\
Humanoid-v2 & poly & $0.05$ & $2$ & $2$ \\
LunarLancerContinuous-v2 & poly & $0.05$ & $3$ & $5$ \\
Swimmer-v2 & exp & --- & $500$ & $0$ \\
Walker2d-v2 & poly & $0.05$ & $2$ & $2$ \\
\bottomrule
\end{tabular}
\caption{Hyperparameters setup for activations of GD3 in different environments. Note that coef denotes coefficient, poly refers to polynomial activations, and exp refers to exponential activations. Polynomial activation has hyperparameters of coefficient $\alpha$, index term $k$, bias term $b$, and exponential activation has hyperparameters of index term $\beta$ and bias term $b$.}
\label{tab:activation}
\end{table}

\subsection{How different activation functions perform?}
\label{sec:activatecomp}

Both GD2 and GD3 stand for a huge family of methods with activated weighting of {\it any} non-decreasing functions, which may lead to activation function engineering dilemma, i.e., it may be confusing what activation function to choose when dealing with an unknown environment. We claim that there is no need for trivial design (it is hard to design functions that meet the constraint in Condition \ref{cond:ineq}), because we could get satisfying performance with some simple functions. We verify this by choosing three types of commonly used functions, polynomial function $g(x;\alpha,k,b) = \alpha x^k + b,\, (\alpha,k,b>0)$ and tanh $g(x;\beta,b) = \frac{\exp(\beta x)-\exp(-\beta x)}{\exp(\beta x) + \exp(-\beta x)}+b, \, (\beta, b>0)$, and exponential function $g(x;\beta,b) = \exp(\beta x) + b, (\beta, b>0)$, as activation function and testing the effectiveness of these activation functions under different parameters. The index term $k$ in polynomial activation is not required to be even as we would add a non-negative bias term. We select one typical environment in MuJoCo~\cite{todorov2012mujoco}, Hopper-v2, to uncover the properties and benefits of simple activations. The detailed hyperparameter setup is listed in Table \ref{tab:hyperparameter}.

\noindent {\textbf{Can operator distance be bounded?}} Based on Theorem~\ref{theo:ga}, the $T(Q^*)$ term in the upper bound remains uncertain with \textit{arbitrary} non-decreasing activations. Intuitively, if we assume $Q$ function is smooth, then $T(Q*)$ can be bounded as the activation function is non-decreasing and the $Q$ value is bounded. While it is essential to analyze whether the bound can be bounded with simple activation functions empirically. We run 5 independent experiments on Hopper-v2 with polynomial, tanh and exponential activations. It turns out that though poly and tanh deviate a little far away from max operator compared with exponential activation, the distance is actually measurable and it would gradually converge with the increment of training steps, which is shown clearly in Fig \ref{fig:hopperdistance} where {\it poly} refers to polynomial activation. The distance between max operator and generalized-activated weighting operator is doomed to be bounded as larger Q values always get larger weights which makes the weighted value estimation lean to the maximal value with value iteration thanks to the non-decreasing activation function. The convergence ensures that our estimator is valid.

\begin{figure*}
    \centering
    \subfigure[Distance]{
    \label{fig:hopperdistance}
    \includegraphics[scale=0.45]{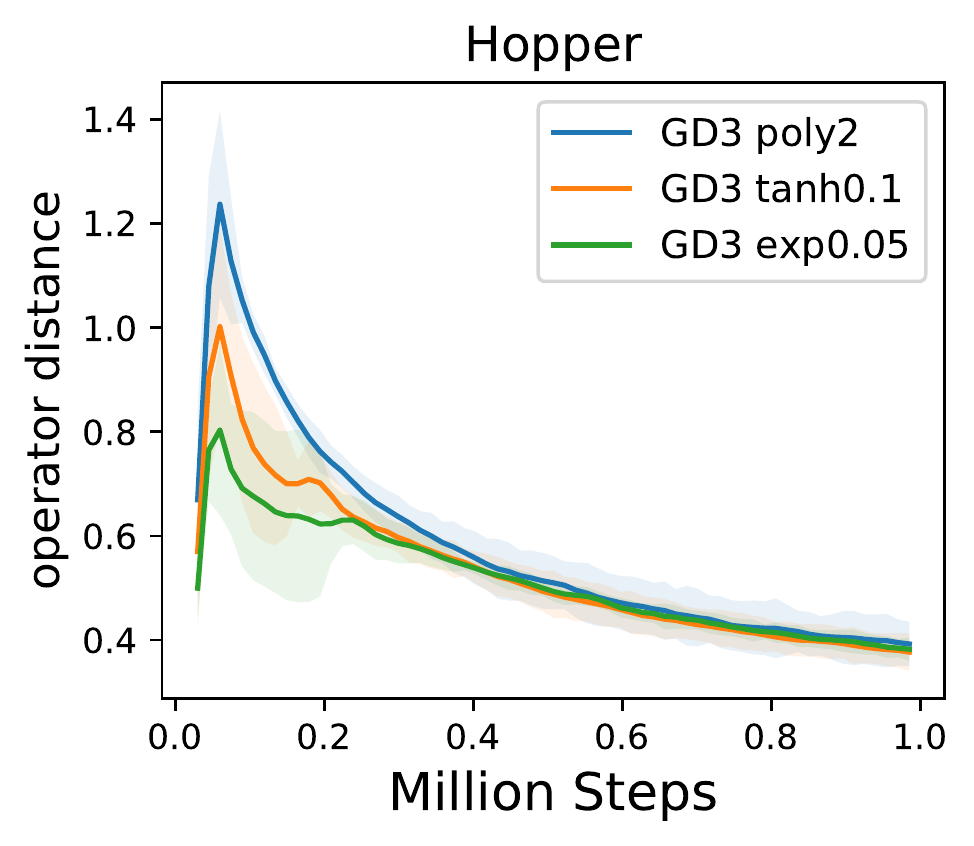}
    }\hspace{0mm}
    \subfigure[Estimation bias]{
    \label{fig:hopperbias}
    \includegraphics[scale=0.45]{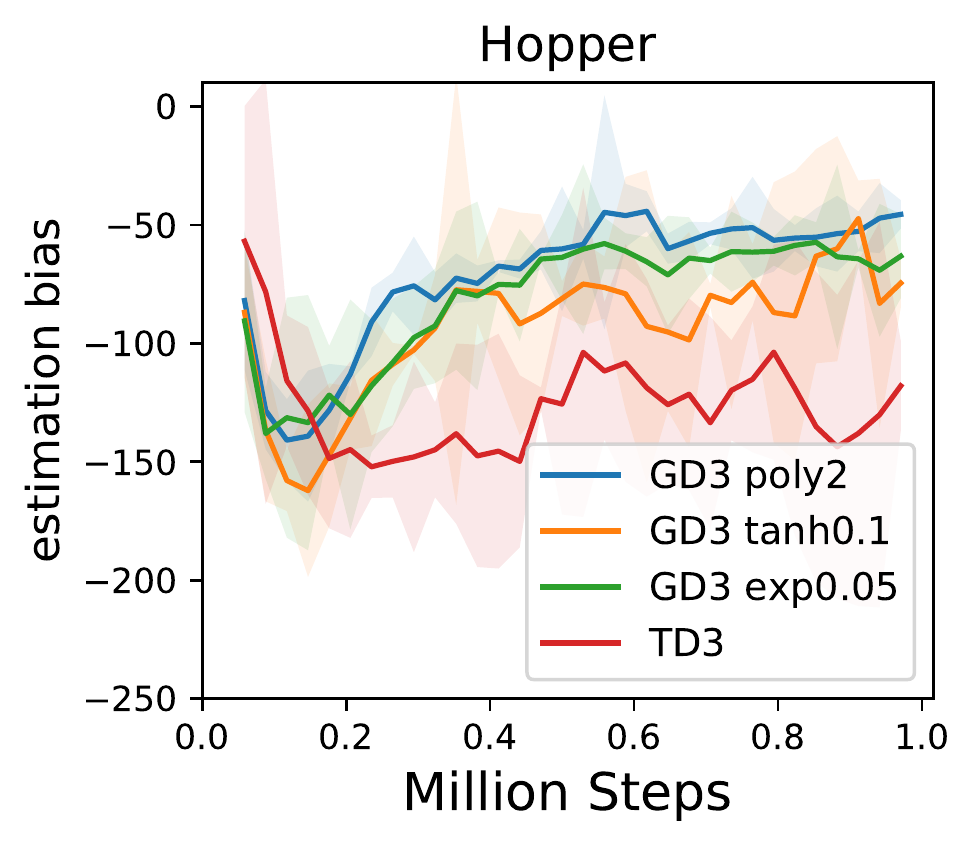}
    }\hspace{0mm}
    \subfigure[Performance]{
    \label{fig:hopperperformance}
    \includegraphics[scale=0.45]{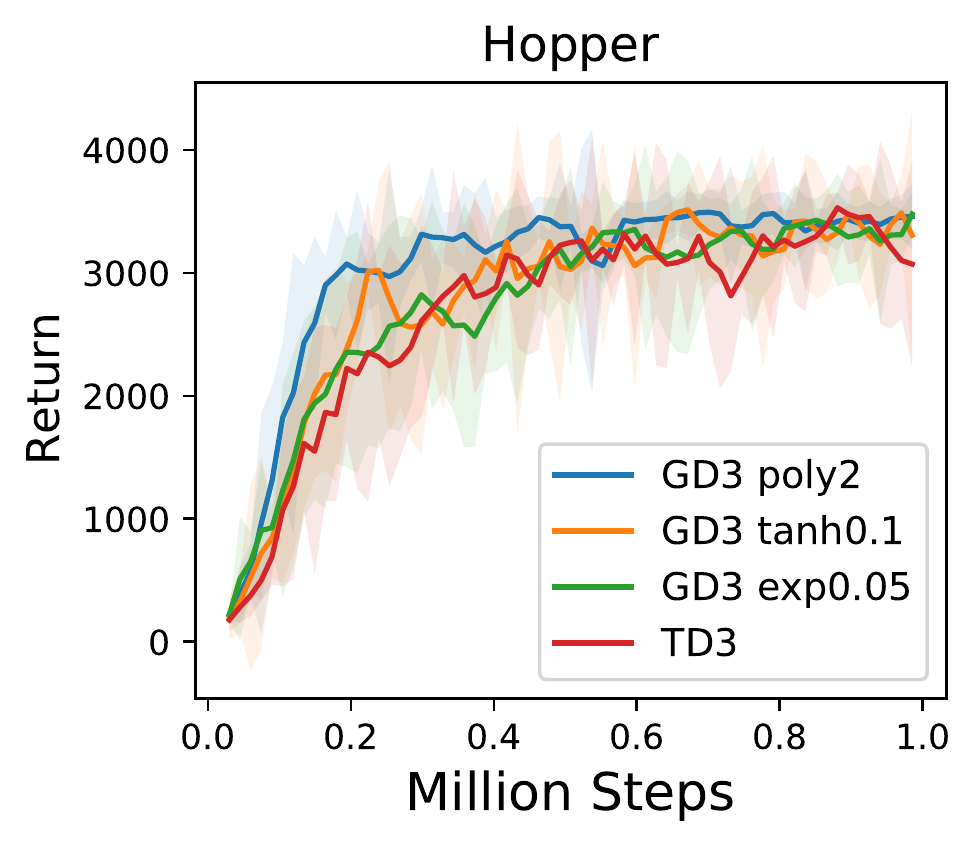}
    }
    \caption{Comparison among exponential activation, polynomial activation and tanh activation: (a) distance between activations and max operator; (b) value estimation bias; (c) overall performance with different activations (5 runs, mean $\pm$ standard deviation).}
    \label{fig:hoppercompare}
\end{figure*}

\noindent {\textbf{Can GD3 alleviate bias?}} Based on Theorem \ref{theo:gd3}, we could get a suitable estimator that could achieve a balance between overestimation bias and underestimation bias with proper activation design. We verify whether simple activations could alleviate the bias by recording the bias induced by GD3 and comparing it with TD3. True values are evaluated by rolling out the present policy and averaging the discounted future rewards over the trajectories which start from the sampled states every $10^5$ steps. Activations help relieve overestimation bias as guaranteed by Corollary~\ref{coro:gd2} but not necessarily underestimation bias, while we can find such function as long as it meets Condition~\ref{cond:ineq}. However, the result in Fig \ref{fig:hopperbias} shows that simple activation functions may induce larger bias than TD3 where all of the activations (polynomial, tanh, exponential) reserve larger bias than TD3. It indicates that simple activations can also empirically benefit underestimation bias alleviation. Fig \ref{fig:hopperperformance} shows that simple activations are enough for good performance where all of them outperform TD3 on Hopper-v2. 

\begin{figure*}
    \centering
    \subfigure[The effect of coefficient $\alpha$.]{
    \label{fig:halfalpha}
    \includegraphics[scale=0.45]{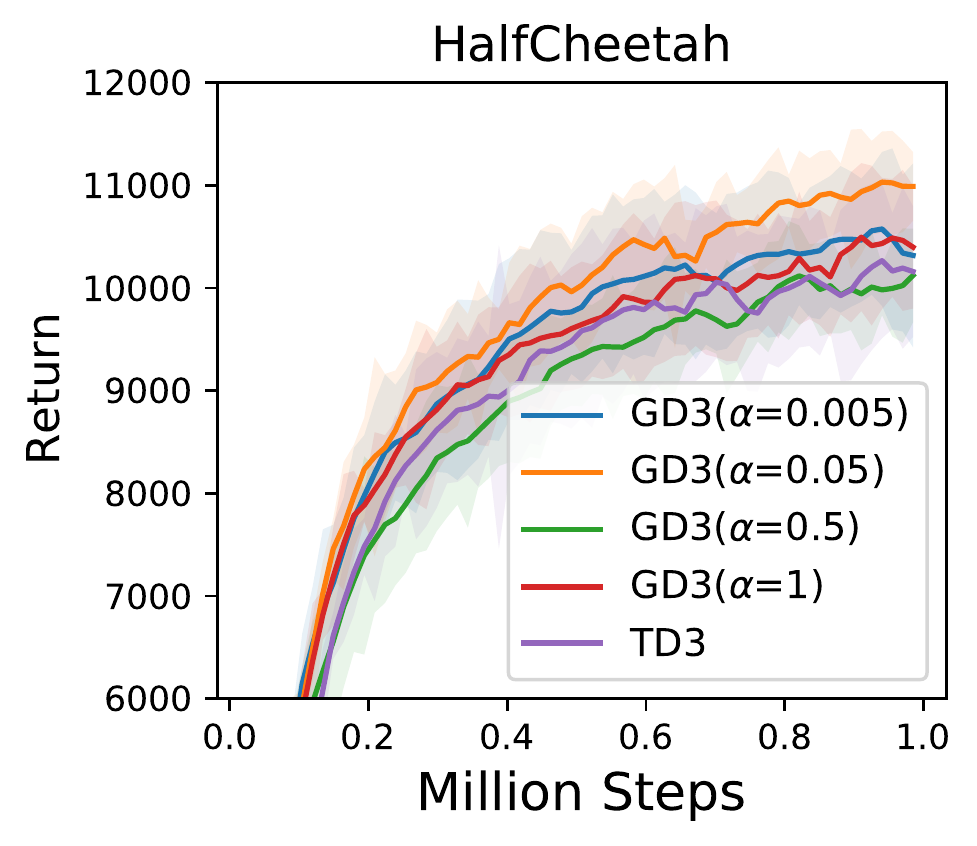}
    }
    \subfigure[The effect of index $k$.]{
    \label{fig:halfk}
    \includegraphics[scale=0.45]{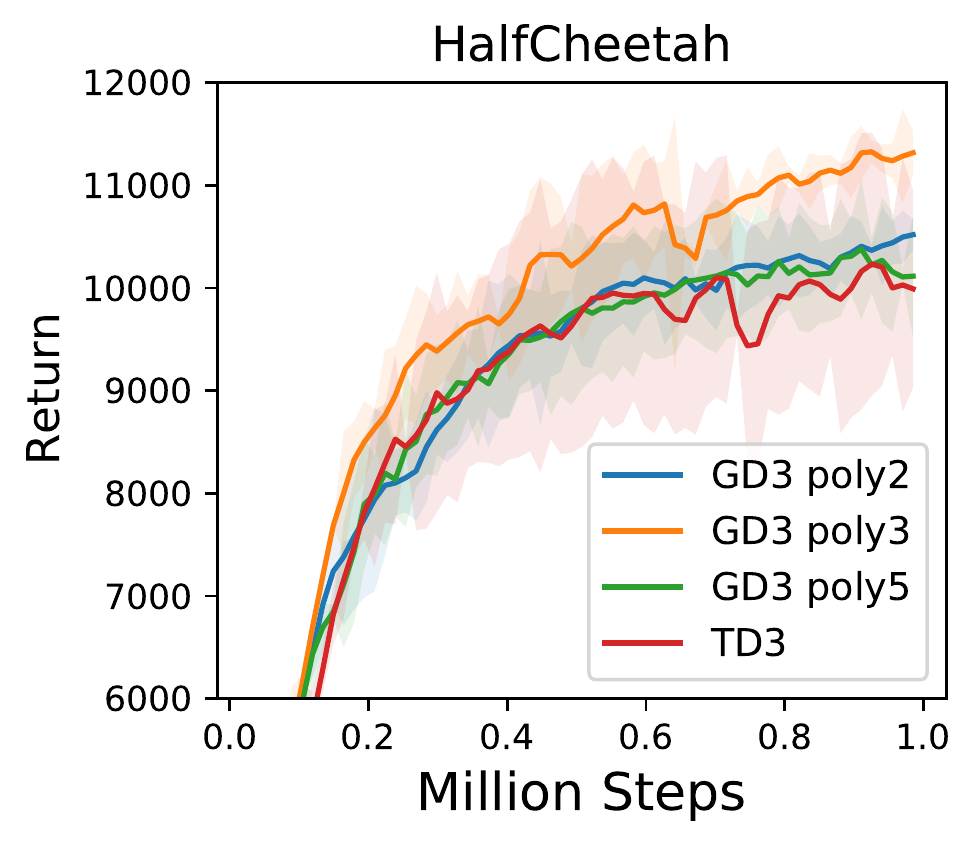}
    }
    \subfigure[The effect of $\beta$ in tanh.]{
    \label{fig:halftanhbeta}
    \includegraphics[scale=0.45]{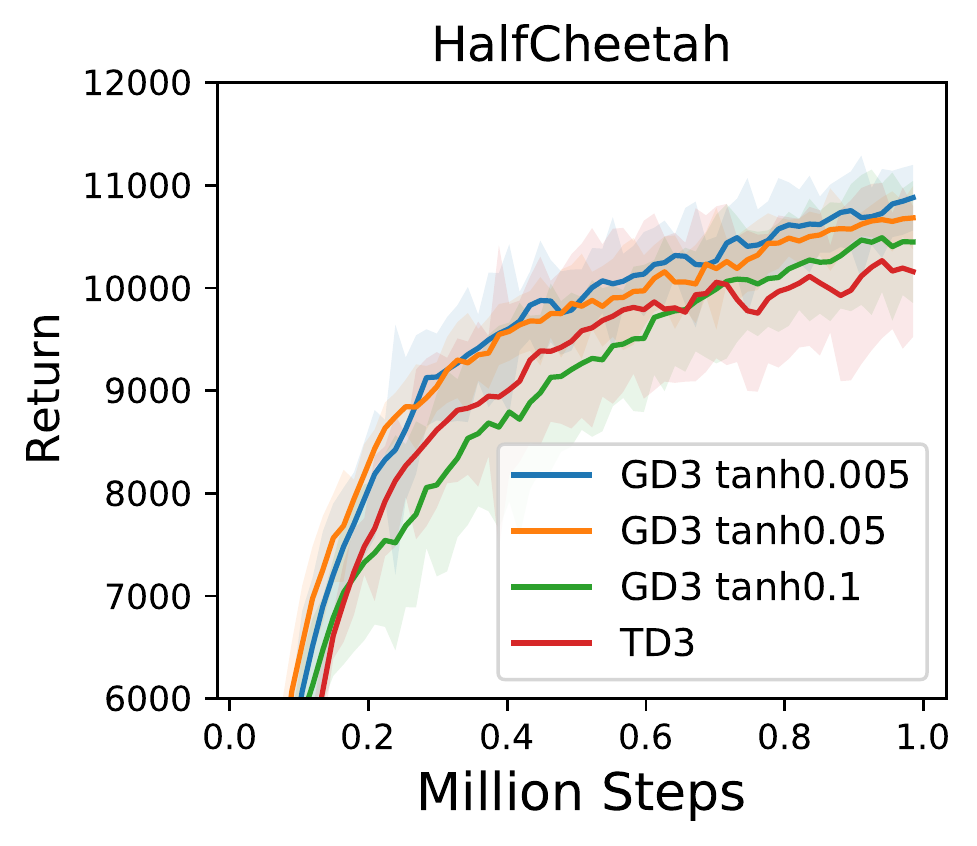}
    }
    \subfigure[The effect of $\beta$ in exp.]{
    \label{fig:halfexpbeta}
    \includegraphics[scale=0.45]{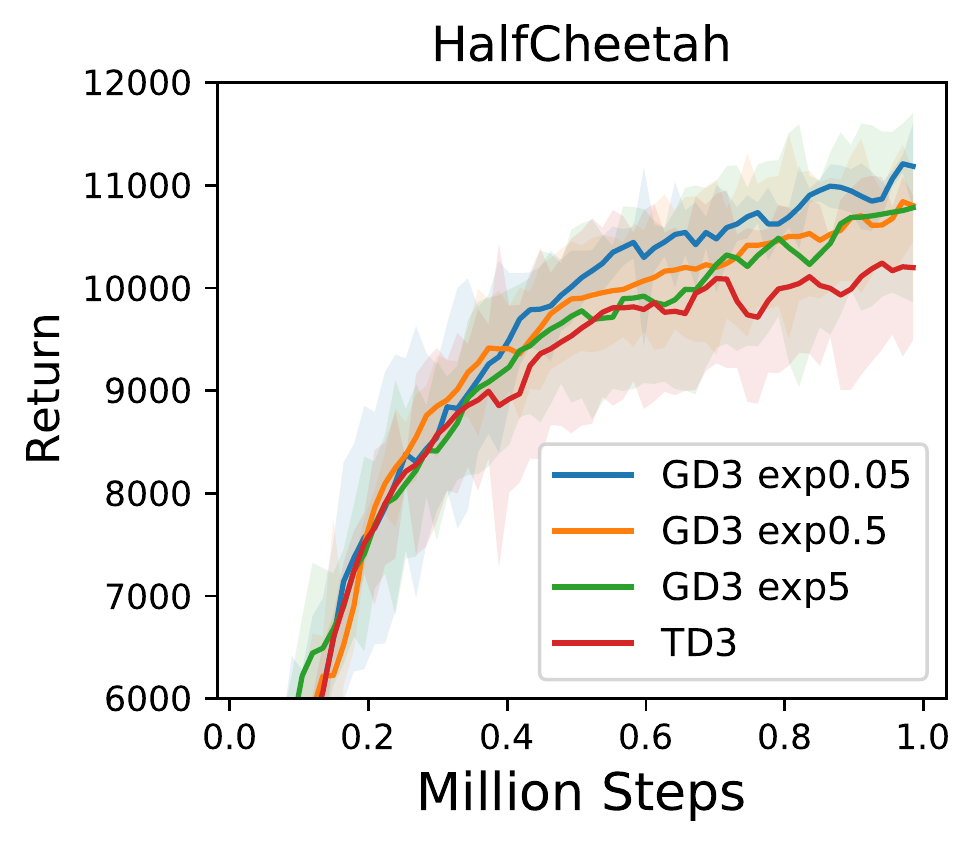}
    }
    \subfigure[The effect of bias $b$.]{
    \label{fig:halfbias}
    \includegraphics[scale=0.45]{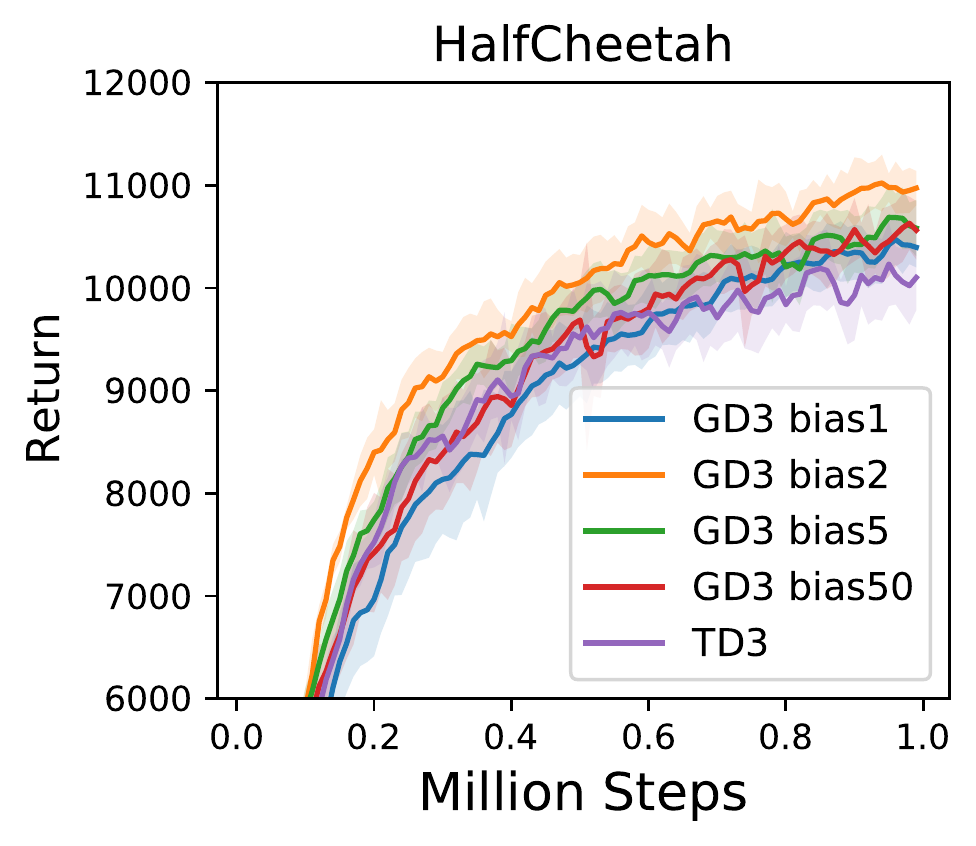}
    }
    \caption{Parameter study and performance comparison. We use polynomial activation with $k=3$ in (a) and (e) (5 runs, mean $\pm$ standard deviation). Note that poly denotes polynomial activation, exp denotes exponential activation, and tanh denotes tanh activation.}
    \label{fig:halfablation}
\end{figure*}

\subsection{How different parameters affect performance?}
\label{sec:ablation}
To better understand how hyperparameters in simple activation functions affect the performance of GD3, we conduct the parameter study in a MuJoCo environment, HalCheetah-v2. For polynomial activation, the coefficient $\alpha$ is vital as small $\alpha$ may lead to a slight underestimation bias while we could achieve an intermediate value that provides a trade-off as shown in Fig \ref{fig:halfalpha}. The index $k$ in polynomial activation and $\beta$ in tanh activation significantly leverage the value estimation result where larger $k$ results in a smaller gap from the max operator. Fig \ref{fig:halfk} and Fig \ref{fig:halfexpbeta} show that too large $k$ in polynomial activation and too large $\beta$ in exponential activation may incur a bad performance. While there does exist a trade-off for $k$ in polynomial activation and $\beta$ in exponential activation. Fig \ref{fig:halftanhbeta} shows that smaller $\beta$ may be better for HalfCheetah-v2, and tanh is not sensitive to $\beta$ as long as it is not too large.  From corollary \ref{coro:bias}, bias term is essential which could result in a softer value estimation while we should not use large bias as it may domain the activation function and lead to even weights (approximate mean operator according to Corollary \ref{coro:zero}). We should use a non-negative bias term to ensure the validity of activation such that larger values would always get larger weights. It turns out that we could achieve an intermediate trade-off for bias as shown in Fig \ref{fig:halfbias}. It is worth noting that almost all GD3 results under different parameters significantly outperform TD3.

\begin{figure*}[!htb]
    \centering
    \includegraphics[scale=0.45]{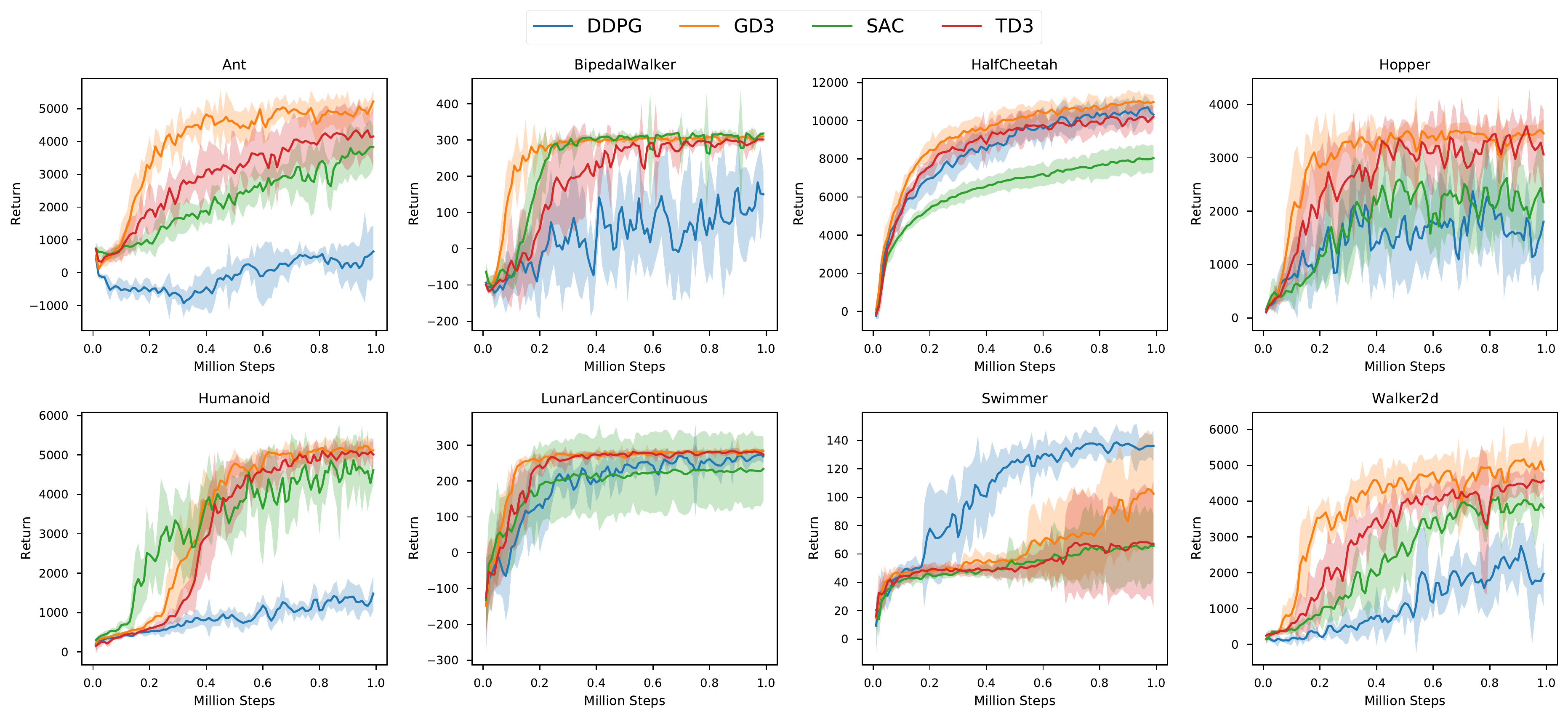}
    \caption{Performance comparison in continuous control tasks. The optimal activation functions are, Ant, HalfCheetah, LunarLancerContinuous: poly3; BipedalWalker, Hopper, Humaniod, Walker2d: poly2; Swimmer: exponential (5 runs, mean $\pm$ standard deviation). Most of the bias terms are set to be 2.}
    \label{fig:gd3result}
\end{figure*}

\subsection{Extensive experiments}
\label{sec:understandgd3}
GD3 is designed for better value estimation where we get a soft value estimation that leans towards the max operator with a little gap by integrating the weighted value function over action space. To further investigate the benefits of simple activations, we conduct additional experiments on other environments. We use common baseline algorithms, DDPG (\cite{lillicrap2015continuous}), TD3 (\cite{fujimoto2018addressing}) and Soft Actor Critic (SAC) (\cite{haarnoja2018soft}), as baselines where DDPG refers to fine-tuned DDPG in (\cite{TD3}) instead of vanilla DDPG implementation (as the vanilla DDPG performs badly). The result could be seen in Fig \ref{fig:gd3result} where we find that GD3 significantly outperforms these common baseline methods with larger return and faster convergence. Notice that optimal activations for different tasks are actually task-specific. While polynomial activation performs well on almost all of the tasks. Various choices of activations make GD3 more flexible and changeable which makes it possible for near-optimal value estimation with reasonable activation functions.

\section{Conclusion}
\label{sec:conclusion}
We propose a general framework for bias alleviation in deep reinforcement learning, namely generalized-activated weighting operator. We show that the distance between generalized-activated weighting operator and max operator can be bounded both theoretically and empirically. We further present \underline{G}eneralized-activated \underline{D}eep \underline{D}ouble \underline{D}eterministic Policy Gradients (GD3) algorithm. GD3 does a correction to the value estimation with function transformation over action space, leveraging any non-decreasing activation functions. We find that there is no need for trivial activation design to specifically meet the bias constraints as optimal activations are task-specific and simple activations are adequate for both higher sample efficiency and good performance. Experimental results on several challenging continuous control tasks show that GD3 with task-specific activations significantly outperforms the common baseline methods. 

For future work, it would be interesting to apply the generalized-activated weighting operator in discrete regime. It would also be interesting to study how to accelerate the learning process of double actors and design a more efficient variant of the GD3 algorithm.

% Numbered list
% Use the style of numbering in square brackets.
% If nothing is used, default style will be taken.
%\begin{enumerate}[a)]
%\item 
%\item 
%\item 
%\end{enumerate}  

% Unnumbered list
%\begin{itemize}
%\item 
%\item 
%\item 
%\end{itemize}  

% Description list
%\begin{description}
%\item[]
%\item[] 
%\item[] 
%\end{description}  

% Figure
%\begin{figure}[<options>]
%	\centering
%		\includegraphics[<options>]{}
%	  \caption{}\label{fig1}
%\end{figure}

%\begin{table}[<options>]
%\caption{}\label{tbl1}
%\begin{tabular*}{\tblwidth}{@{}LL@{}}
%\toprule
%  &  \\ % Table header row
%\midrule
% & \\
% & \\
% & \\
% & \\
%\bottomrule
%\end{tabular*}
%\end{table}

% Uncomment and use as the case may be
%\begin{theorem} 
%\end{theorem}

% Uncomment and use as the case may be
%\begin{lemma} 
%\end{lemma}

%% The Appendices part is started with the command \appendix;
%% appendix sections are then done as normal sections
\appendix
\section{Omitted proofs}
\subsection{Proof of Theorem 1}
\begin{proof}
The left hand side could be derived directly by definition:
\begin{equation}
\label{eq:gslfs}
\begin{aligned}
    &GA_g(Q(s,\pi(s));\psi) \\
    &\quad \le \int_{a\in\mathcal{A}}\frac{g(Q(s,a);\psi)}{\int_{a^\prime\in\mathcal{A}}g(Q(s,a^\prime);\psi)da^\prime}\max_{\hat{a}} Q(s,\hat{a})da\\
    &\quad \le \max_a Q(s,a)
\end{aligned}
\end{equation}

As is in (\cite{pan2020softmax}) and (\cite{fujimoto2018addressing}), in order to show the right hand side, we would need the aid of log-sum-g operator, i.e., $$\mathrm{lsg}_\psi(Q(s,\pi(s));\beta) = \frac{1}{\beta}\ln \int_{a\in\mathcal{A}} g(Q(s,a);\psi)da.$$ Note here the activation is a generalized one instead of exponential function, if $\psi=\beta$ and $g(Q(s,a);\psi) = \exp(\psi Q(s,a))$, the lsg operator would become lse operator. Through a non-decreasing function $g(\cdot)$, the estimated $Q$ value is activated with a new density distribution $p_\psi(s,a) = \frac{g(Q(s,a);\psi)}{\int_{a^\prime\in\mathcal{A}} g(Q(s,a^\prime))da^\prime}$. Then from one perspective,
\begin{equation}
\label{eq:gsrhs1}
\begin{aligned}
    &\mathrm{lsg}_\psi(Q(s,\pi(s))) - GA_g(Q(s,\pi(s));\psi) \\
    &= \frac{1}{\beta}\ln \int_{a\in\mathcal{A}} g(Q(s,a);\psi)da - \int_{a\in\mathcal{A}} p_\psi(s,a)Q(s,a)da \\
    &= \frac{1}{\beta}\int_{a\in\mathcal{A}}p_\psi(s,a)\left(\ln \frac{g(Q(s,a);\psi)}{p_\psi(s,a)} \right)da \\ 
    &\qquad \qquad - \int_{a\in\mathcal{A}} p_\psi(s,a)Q(s,a)da \\
    &= \frac{\int_{a\in\mathcal{A}} -p_\psi(s,a)\ln p_\psi(s,a) da}{\beta}  + \int_{a\in\mathcal{A}}p_\psi(s,a)T(Q)da \\
    &\le \frac{\int_{a\in\mathcal{A}1da - 1}}{\beta} + T(Q^*)
\end{aligned}
\end{equation}

where $T(Q(s,a);\psi,\beta) = \frac{1}{\beta}\ln g(Q(s,a);\psi) - Q(s,a)$ and $T(Q^*)$ is the maximum value of this function. The above inequality is induced by the fact that $\forall a$, $-p_\psi(s,a)\ln p_\psi(s,a) \le 1-p_\psi(s,a)$ and $\int_{a\in\mathcal{A}}p_\psi(s,a)da = 1$. Note that $\mathcal{C}(Q,s,\epsilon) \subset \mathcal{A}$, hence,
\begin{equation}
\label{eq:gsrhs2}
    \begin{aligned}
    &\mathrm{lsg}_\psi(Q(s,\pi(s))) \\ &= \frac{\ln \int_{a\in\mathcal{A}} g(Q(s,a);\psi)da}{\beta} \\
    &\ge \frac{\ln \int_{a\in\mathcal{C}(Q,s,\epsilon)} g(\max_{a^\prime}Q(s,a^\prime) - \epsilon;\psi)da}{\beta} \\
    &= \frac{\ln \int_{a\in\mathcal{C}}1da + \ln g(\max_{a^\prime}Q(s,a^\prime) - \epsilon;\psi)}{\beta} \\
    &\ge \frac{\ln F(Q,s,\epsilon) + \ln \exp\left(\beta(\max_{a^\prime}Q(s,a^\prime) - \epsilon) \right)}{\beta} \\
    &= \frac{\ln F(Q,s,\epsilon)}{\beta} + \max_{a^\prime}Q(s,a^\prime) - \epsilon
    \end{aligned}
\end{equation}

We get the inequality that $\ln \int_{a\in\mathcal{A}} g(Q(s,a);\psi)da \ge \ln \int_{a\in\mathcal{C}(Q,s,\epsilon)} g(\max_{a^\prime}Q(s,a^\prime) - \epsilon;\psi)da$ due to the fact that $g(\cdot)$ is a non-decreasing function, hence for $\forall \, Q\in \mathcal{C}(Q,s,\epsilon)$, $g(Q(s,a))\ge g(\max_{a^\prime}Q(s,a^\prime))$. Note that $\beta\in\mathcal{B}$, i.e. $\beta$ is small. By combining Eq.(\ref{eq:gsrhs1}) and Eq. (\ref{eq:gsrhs2}) and denote $M(Q,\epsilon;\psi,\beta) = \epsilon + \frac{\int_{a\in\mathcal{A}}1da - 1 + \ln F(Q,s,\epsilon)}{\beta} + T(Q^*)$, we have the desired conclusion.

\end{proof}

\subsection{Proof of Theorem 3}
\begin{proof}
By using Corollary \ref{coro:gd2} and Lemma \ref{lemma:gd2gd3}, the right hand side can be easily shown. For the left hand side, if the non-decreasing function $g(Q(s,\pi(s);\theta^-);\psi)$ satisfies Condition \ref{cond:inequ}, then the left hand side holds.

\begin{condition}
\label{cond:inequ}
$\int_{a\in\mathcal{A}} g(Q(s,a);\psi) Q(s,a)da \int_{a^\prime\in\mathcal{A}}1da^\prime \ge \int_{a\in\mathcal{A}}g(Q(s,a);\psi)da\int_{a^\prime\in\mathcal{A}}Q(s,a^\prime)da^\prime$
\end{condition}

By definition, we have
\begin{equation}
\label{eq:theo3eq1}
    bias(\mathcal{T}_{TD3}(s))=\mathbb{E}[\hat{Q}_i(s,a)] - \mathbb{E}\left[Q(s,\pi(s;\phi_i^\prime);\theta^\prime) \right]
\end{equation}

where $\theta^\prime$ is the true parameter and the value estimation $\hat{Q}_i(s,a) = \min_{i=1,2} Q_i(s,a;\theta_i^\prime)$.
\begin{equation}
\label{eq:theo3eq2}
    bias(\mathcal{T}_{GD3}(s)) = \mathbb{E}[GA_g(\hat{Q}_i(s,a);\psi)] - \mathbb{E}\left[Q(s,\pi(s;\phi_i^\prime);\theta^\prime) \right]
\end{equation}

Eq. (\ref{eq:theo3eq1}) and Eq. (\ref{eq:theo3eq2}) have the same component. Thus, 
\begin{equation}
    \label{eq:theo3eq3}
    \begin{aligned}
    &bias(\mathcal{T}_{GD3}(s)) - bias(\mathcal{T}_{TD3}(s)) \\ &\quad \quad = \mathbb{E}[GA_g(\hat{Q}_i(s,a);\psi)] - \mathbb{E}[\hat{Q}_i(s,a)]
    \end{aligned}
\end{equation}

If Condition \ref{cond:inequ} holds, we have $\frac{\int_{a\in\mathcal{A}} g(Q(s,a);\psi) Q(s,a)da}{\int_{a\in\mathcal{A}}g(Q(s,a);\psi)da}\ge \frac{\int_{a^\prime\in\mathcal{A}}Q(s,a^\prime)da^\prime}{\int_{a^\prime\in\mathcal{A}}1da^\prime}$ which is exactly $\mathbb{E}[GA_g(\hat{Q}_i(s,a);\psi)]\ge \mathbb{E}[\hat{Q}_i(s,a)]$ and hence the left hand side holds.

It turns out that there exist some non-decreasing functions satisfying such inequality apart from softmax, e.g., the linear activation family. Suppose $g(Q(s,\cdot);\psi) = \psi Q(s,\cdot) + b$, $b\in\mathbb{R}$ and we require $b\ge 0$ here. Then we have the generalized-activated weighting operator: $GA_g(Q(s,\cdot);\psi) = \int_{a\in\mathcal{A}}\frac{(\psi Q(s,a) + b) Q(s,a)}{\int_{a^\prime\in\mathcal{A}} (\psi Q(s,a^\prime) + b) da^\prime}da$. We also have $\psi \ge 0$ as the activation function has to be non-decreasing. By applying Cauchy-Schwarz inequality, it is easy to find
\begin{equation}
    \label{eq:theo2}
    \begin{aligned}
    &\quad \nabla_\psi GA_g(Q(s,\cdot);\psi) \\
    &= \nabla_\psi \int_{a\in\mathcal{A}}\frac{(\psi Q(s,a) + b) Q(s,a)}{\int_{a^\prime\in\mathcal{A}} (\psi Q(s,a^\prime) + b) da^\prime}da \\
    &= \frac{b \left[\int_{a\in\mathcal{A}}Q^2(s,a)da\int_{a^\prime\in\mathcal{A}}1da^\prime - (\int_{a\in\mathcal{A}}Q(s,a)da)^2 \right]}{\left(\int_{a^\prime\in\mathcal{A}}(\psi Q(s,a^\prime) + b)da^\prime\right)^2} \\
    &\ge 0
    \end{aligned}
\end{equation}

Exponential family, i.e., $g(Q(s,\cdot);\psi,k) = k^{\psi Q(s,\cdot)}, k\ge 1, \psi \ge 0,$ also satisfies the inequality, follow similar procedure as the above proof and apply Cauchy-Schwarz inequality, we have
\begin{equation}
    \label{eq:theo2exp}
    \begin{aligned}
    &\quad \nabla_\psi GA_g(Q(s,\cdot);\psi) \\
    &= \nabla_\psi \int_{a\in\mathcal{A}}\frac{k^{\psi Q(s,a)} Q(s,a)}{\int_{a^\prime\in\mathcal{A}} k^{\psi Q(s,a^\prime)} da^\prime}da \\
    &= \frac{\ln k\left(\int_{a\in\mathcal{A}}Q^2(s,a) k^{\psi Q(s,a)}da\int_{a^\prime\in\mathcal{A}}k^{\psi Q(s,a)}da^\prime \right)}{(\int_{a^\prime\in\mathcal{A}}k^{\psi Q(s,a^\prime)}da^\prime)^2} \\ &\quad - \frac{\ln k\left(\int_{a\in\mathcal{A}}Q(s,a)k^{\psi Q(s,a)}da\right)^2}{\left(\int_{a^\prime\in\mathcal{A}}k^{\psi Q(s,a^\prime)}da^\prime\right)^2} \\
    &\ge 0
    \end{aligned}
\end{equation}
\end{proof}

% To print the credit authorship contribution details
\printcredits

%% Loading bibliography style file
%\bibliographystyle{model1-num-names}
\bibliographystyle{cas-model2-names}

% Loading bibliography database
\bibliography{ref.bib}

% Biography
\bio{JiafeiLyu.jpg}
% Here goes the biography details.
\textbf{Jiafei Lyu}.
{Jiafei Lyu} received his B.S. degree in Engineering Physics from Tsinghua University, Beijing, China, in 2020. He is currently pursuing a Ph.D. degree in Tsinghua Shenzhen International Graduate School. His research interest includes deep reinforcement learning, big data analysis, machine learning, and artificial intelligence.
\endbio

\bio{yuyang.jpg}
\textbf{Yu Yang}.
{Yu Yang} received his B.S. degree in Beijing Jiaotong University, Beijing, China, in 2020. He is currently a master student at Tsinghua Shenzhen International Graduate School. His research interest includes deep reinforcement learning, especially multi-agent deep reinforcement learning, deep learning, and artificial intelligence.
\endbio

\bio[width=2.5cm]{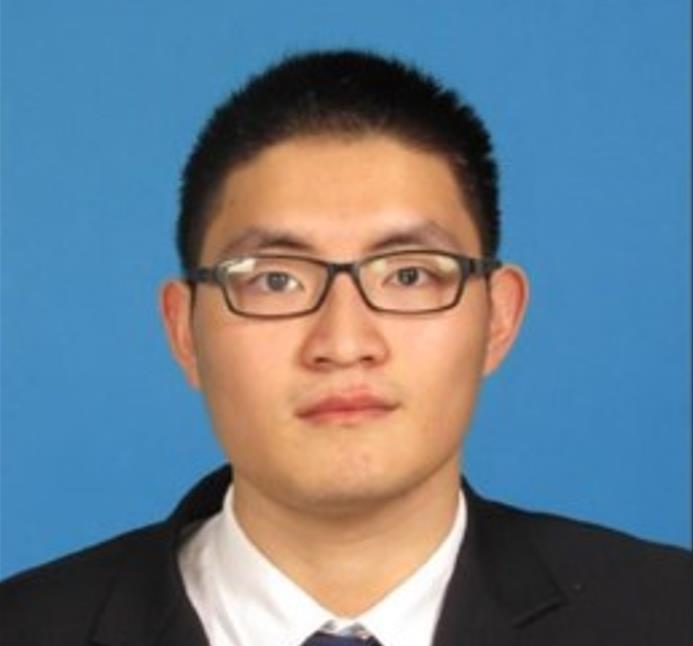}
\textbf{Jiangpeng Yan}.
{Jiangpeng Yan} received his B.S. degree in the Department of Automation from Tsinghua University, Beijing, China, in 2017. He is currently a Ph.D. Student in Department of Automation, Tsinghua University, China. His research interests include medical image analysis, reinforcement learning, and artificial intelligence.
\endbio

\bio{xiuli.jpg}
\textbf{Xiu Li}.
{Xiu Li}(Member, IEEE) received her Ph.D. degree in computer integrated manufacturing in 2000. Since then, she has been working with Tsinghua University. Her research interests are in the areas of data mining, deep learning, computer vision, reinforcement learning, and image processing.
\endbio

%\bio{pic1}
% Here goes the biography details.
%\endbio

\end{document}